\documentclass{article}

\usepackage[margin=1in]{geometry}
\usepackage{authblk}      
\usepackage{amsmath, amssymb, amsthm}

\usepackage[utf8]{inputenc}
\usepackage[T1]{fontenc}

\usepackage{notations}
\usepackage{subcaption}  
\usepackage{stmaryrd}

\usepackage[colorlinks=false]{hyperref}
\usepackage[capitalize,noabbrev]{cleveref}

\usepackage{tikz}
\usetikzlibrary{
    decorations.pathmorphing,
    positioning,
    decorations.shapes,
    shapes.geometric,  
    arrows.meta,      
    fit,             
}
\usepackage{placeins}
\usepackage{xcolor}

\usepackage[most]{tcolorbox}
\newtcolorbox{conclusionbox}{
  colback=blue!5,        
  colframe=blue!75!black, 
  coltitle=black,        
  fonttitle=\bfseries,   
  boxrule=1pt,           
  arc=1mm,               
  left=2mm,              
  right=2mm,             
  top=1mm,               
  bottom=1mm,            
}

\usepackage[textsize=tiny]{todonotes}
\usepackage[round]{natbib}

\title{When to Align, When to Predict: A Phase Diagram for Multimodal Learning}

\author[1]{Ilay Kamai\thanks{Corresponding author: \texttt{ilay.kamai@campus.technion.ac.il}}}
\author[2]{Hugues Van Assel}
\author[2]{Aviv Regev}
\author[1]{Hagai B.\ Perets}
\author[3]{Randall Balestriero}
\affil[1]{Technion}
\affil[2]{Genentech}
\affil[3]{Brown University}
\date{}

\begin{document}

\maketitle
\begin{abstract}
Cross-modal alignment (CA) and cross-modal prediction (CP) are the dominant paradigms for
multimodal representation learning, yet there is no systematic understanding of when each
succeeds and when each fails --- a gap that leaves practitioners, especially in scientific
domains with heterogeneous instruments and multiple levels of measurement, unable to
diagnose why standard methods underperform the best single modality. We study both
objectives under a spiked signal-plus-noise model with structured cross-modal nuisance
correlation, the ingredient that breaks the classical recovery guarantees, and derive
separation ratios that expose complementary failure modes: alignment whitens each modality
and fails when nuisance is strongly correlated across views; prediction encodes whatever is
cross-predictable through a one-sided whitening, with recovery governed by source-modality
quality. The resulting phase diagram partitions multimodal problems into four regimes ---
Both, CA only, CP only, and Neither --- refined by a recovery count that separates partial
recovery from complete failure. We present a data-driven procedure to locate real-world
datasets in this diagram using a small labeled subsample, identifying the preferred
objective and prediction direction before any cross-modal training, and identifying when no
objective in the CA/CP family can improve on the stronger modality alone. Experiments on
synthetic data, stereo-vision benchmarks, image--caption pairs, and two real scientific
domains --- astronomy and single-cell multi-omics --- validate the predictions in the
nonlinear regime, including both faces of the Neither regime. Code to reproduce the results
is available at \url{https://github.com/IlayMalinyak/mm_align_vs_pred}.
\end{abstract}
\section{Introduction}\label{sec:intro}
Multimodal representation learning aims to extract a shared latent structure from paired observations across different modalities, such as images and captions, audio and video, molecular cell profiles and tissue images, or different telescopes observing the same object. Multimodal learning is crucial when a single modality alone is insufficient to fully describe a phenomenon of interest or when the information in a single modality is degenerate or noisy. Moreover, combining multiple modalities of the same object is an important building block for foundational models. Multimodal learning has achieved many successes across domains and scales (e.g., ~\cite{Cui2025_biology, Parker2025_aion, Alayrac2022_flamingo, Bodnar2024_aurora}). However, the field is mostly empirical, and theoretical studies are
relatively sparse, though phenomena like the modality gap in contrastive multimodal models \cite{Liang2022_mind_the_gap} have started attracting principled analysis (e.g., \cite{Yossef_Levi2024_clip_geomtry}). 

Here, we focus on the interplay between the two leading multimodal learning paradigms - \emph{Cross-modal alignment} and \emph{Cross-modal prediction}.
\emph{Cross-modal alignment} (CA) projects paired samples into a common embedding space, encouraging matched pairs to be close; CLIP~\cite{radford2021_clip}, ImageBind~\cite{Girdhar2023_imagebind}, and VICReg~\cite{Bardes2021_vicreg} are prominent examples.
\emph{Cross-modal prediction} (CP) reconstructs one modality from the other through a bottleneck, so that the learned representation retains whatever is useful for prediction; masked autoencoders~\cite{he2021_MAE}, data2vec~\cite{Baevski2022_data2vec}, and the decoder side of encoder-decoder models follow this approach.
Both paradigms are widely used, yet they are typically studied in isolation and selected by practitioners based on empirical performance or architectural convenience rather than a principled understanding of their relative strengths and suitability to the problem at hand.
We shed light on fundamental characteristics in multimodal learning, when implemented with CA or CP, and provide practical guidelines for success and failure modes of the two. We derive exact solutions and recovery conditions in the linear case, verify the results for the non-linear case using various experiments with deep neural networks, and provide a data-driven method for the analysis of multimodality problems. 
To the best of our knowledge, this is the first work to systematically compare CA and CP under a multimodal spiked model with structured cross-modal nuisance correlation, and to translate the resulting recovery conditions into a practical diagnostic procedure that applies to real paired datasets.
Our main contributions are:
\begin{itemize}
\item \textbf{Unified linear analysis of CA and CP.}
Using the known equivalences of CA with Canonical Correlation Analysis (CCA) and CP with truncated reduced-rank regression (RRR) as a starting point, we derive closed-form solutions for both objectives and analyze them under a spiked signal-plus-noise model with cross-modal nuisance correlation. We derive separation ratios $\Delta_{\mathrm{CA}}$ and
$\Delta_{\mathrm{CP}}$ that determines when each method recovers the shared signal subspace, and exposes complementary failure modes.
\item \textbf{Phase diagram with four recovery regimes.} The separation ratios
partition the space of multimodal problems into four regions — CA only, CP only,
Both, and Neither — visualized as a phase diagram in signal-noise space. We identify
the Neither regime as the natural habitat of complementary scientific modalities and
an important open problem for multimodal representation learning.
\item \textbf{Data-driven recovery regime estimation.} We propose an algorithm that
predicts the separation ratios for any paired dataset, based on a small labeled subsample,
and before any cross-modal training. Beyond this regime prediction, the per-modality noise
estimates identify which modality is stronger and in which direction CP should be
applied, a non-trivial question to address in practice. 
\item \textbf{Experimental validation across scales.} Experiments on synthetic
data, controlled stereo-vision benchmarks, and image--caption pairs confirm that
the failure modes identified in the linear theory persist with deep networks.
On real scientific data, spanning astronomy and single-cell multi-omics, we confirm the
predicted regime shift and observe both faces of the Neither regime: complete failure and
partial recovery.
\end{itemize}

The paper is organized as follows: in \Cref{sec:solutions_and_bottlenecks}, we present the methods and construct a multimodal spiked model with both modality-specific and cross-modal correlated noise features for the linear case. We then derive signal recovery conditions and phase diagrams. In \Cref{sec:experiments}, we provide experimental results that support the theory and an algorithm for estimating recovery regimes. Conclusions are provided in \Cref{sec:conclusion}.

\section{Related Work}
\paragraph{Theory of multimodal learning.}
A small but growing body of work studies multimodal learning theoretically. 
One study \cite{Huang2021_whatmakes} analyzed when using more modalities reduces population risk, showing through generalization bounds that the benefit depends on the gap in representation quality between modality subsets. 
Their linear analysis assumes orthonormal projections and full-rank covariance. 
Another work \cite{Lu2023_theory} proves that multimodal learning can achieve lower sample complexity than unimodal learning by decoupling the complexity of the connection function from the predictor. 
However, their analysis does not capture the effects of modality-specific noise, nuisance features, or rank deficiency that arise in practice. 
Both works establish that multimodality \emph{can} help under specific assumptions; our work characterizes when it works and when it \emph{can also hurt}, and shows that the specific approach (\textit{i.e.} CA or CP) used to combine modalities matters.
In a complementary direction, \cite{BetterTogether2025} studies spike detection in a multimodal spiked covariance model, comparing self-covariance, cross-covariance, and joint-covariance decompositions under finite-sample (Wishart) noise. 
They derive Baik--Ben~Arous--P\'ech\'e (BBP) phase transitions for each matrix and produce phase diagrams showing which method detects the signal first as a function of signal strength and sampling ratio.
Concurrently, \cite{Mergny2025_PLS} establish BBP-type thresholds for
partial least squares (PLS) and CCA under finite-sample Wishart noise, and \cite{Tabanelli2025}
extend this line to a multimodal spiked matrix--tensor model, showing that
joint maximum-likelihood optimization is strictly worse than a sequential
strategy. Both perform finite-sample BBP-type analyses of a single signal
model. Our contribution is complementary, comparing CA against CP at the
population level under structured cross-modal nuisance covariance $\eta_j$
in a matrix--matrix setting. This parameterization is what produces phenomena that prior spiked-model analyses didnot capture, like source--target asymmetry of CP, and the \emph{Neither} regime in which
both paradigms fail simultaneously, for example.

\paragraph{Theory of self-supervised and contrastive learning.}
Our work also connects to a rich literature on the theory of unimodal self-supervised learning (SSL), where ``views'' are generated by data augmentation rather than being structurally distinct modalities.  
Several works derive closed-form solutions for linear SSL models and analyze the role of augmentations in shaping learned representations (e.g., ~\cite{Cabannes2023_ssl_interplay, Balestriero2022_spectral}).
Most closely related is \cite{VanAssel2025}, which compares joint-embedding and reconstruction-based SSL in a unified linear framework, the unimodal counterpart of our analysis. 
They show that joint-embedding methods impose a strictly weaker alignment condition on augmentations than reconstruction methods when irrelevant features have large magnitude, providing provable guidelines for choosing between the two paradigms. 
Our work extends this line of inquiry to the genuinely multimodal setting, where the two ``views'' are not designed augmentations of the same input but fixed, structurally distinct modalities with inherent asymmetries in quality and noise. 
This introduces new phenomena, such as modality bottlenecks, cross-modal nuisance correlation, and asymmetry of cross-prediction, that do not arise in the unimodal SSL framework. 
Other theoretical works on contrastive and non-contrastive learning (e.g., ~\cite{Arora2019_contrastive, HaoChen2021_spectral, Tian2021_understanding}) study downstream task performance as a function of augmentation design, but do not compare between alignment and prediction or address multimodal noise structure.

Beyond theory, our analysis also speaks to a core architectural choice: whether to add a predictor head on top of a joint embedding like in JEPA (\cite{lecun2022path}) architectures (\textit{e.g., } \cite{Assran2023_IJEPA, Balestriero2025_LeJEPA}) or align embeddings directly 
(\textit{e.g., } SimSiam \cite{Chen2020_simsiam}, VICReg \cite{Bardes2021_vicreg}). This choice has typically been motivated 
by collapse prevention or by the presence of conditional context. Our analysis adds a complementary axis to this design choice in the multimodal case: even without conditional information or asymmetric features, the right 
choice between aligning and predicting depends on the noise structure of 
the modality pair.


\section{Cross-Alignment and Cross Prediction Approaches}\label{sec:solutions_and_bottlenecks}
We study two objectives for multimodal representation learning. The first is \emph{cross-alignment} (CA), which aligns paired samples in a shared latent space. The second is \emph{cross-prediction} (CP), which predicts one modality from the other through an encoder--decoder factorization. Both are formalized below and analyzed throughout the paper. More details and proofs are deferred to Appendix~\ref{app:closed_form}.

\subsection{Objectives}
\label{sec:objectives}
Let $\fx : \R^{d_x} \to \R^k$ and $\fy : \R^{d_y} \to \R^k$ be
encoders producing latent codes $\vz_x^{(i)} := \fx(\vx_i)$ and
$\vz_y^{(i)} := \fy(\vy_i)$ in a shared latent space of dimension $k$,
and let
$\bar\vz_x^{(i)} := \vz_x^{(i)} - \tfrac{1}{n}\sum_j \vz_x^{(j)}$ and
$\bar\vz_y^{(i)} := \vz_y^{(i)} - \tfrac{1}{n}\sum_j \vz_y^{(j)}$
denote the centered codes.
Let $\fD : \R^k \to \R^{d_y}$ be a decoder. The two objectives are
\begin{align}
\text{(CA)} \quad &\min_{\fx, \fy} \;
  \tfrac{1}{n} \textstyle\sum_i \| \fx(\vx_i) - \fy(\vy_i) \|_2^2
  \quad \text{s.t.} \quad
  \tfrac{1}{n} \textstyle\sum_i \bar\vz_x^{(i)} \bar\vz_x^{(i)\top}
  = \tfrac{1}{n} \textstyle\sum_i \bar\vz_y^{(i)} \bar\vz_y^{(i)\top}
  = \mI_k,
  \label{eq:ca}
\\
\text{(CP)} \quad &\min_{\fx, \fD} \;
  \tfrac{1}{n} \textstyle\sum_i \| \vy_i - \fD(\fx(\vx_i)) \|_2^2.
  \label{eq:cp}
\end{align}
The CA constraint fits the cross-modal similarity matrix to the identity:
matched pairs are pulled together while each view is separately whitened,
so that no coordinate collapses and no two coordinates are redundant.
Whitening each view rather than their cross-moment is what makes
\eqref{eq:ca} reduce exactly to CCA (\Cref{thm:closed_form_ca}); it is
also the form implemented softly by the variance and covariance terms of
VICReg \citep{Bardes2021_vicreg}, which we use as the CA objective in all
non-linear experiments.
\Cref{fig:ca_cp_paradigms} illustrates the two paradigms.
\begin{figure}[h]
    \centering
    \begin{minipage}[t]{0.66\linewidth}
        \vspace{0pt}
        \centering
        \resizebox{\linewidth}{!}{%
        \begin{tikzpicture}[scale=0.8, every node/.style={transform shape},
            inputnode/.style={circle, draw, minimum size=9mm, inner sep=0pt, font=\small},
            latentnode/.style={circle, draw, fill=gray!30, minimum size=7mm, inner sep=0pt, font=\small},
            encoder/.style={
                draw, fill=gray!15,
                trapezium, trapezium angle=68,
                trapezium stretches body,
                minimum height=9mm, minimum width=15mm,
                shape border rotate=270,
                inner sep=1pt, font=\small\bfseries
            },
            decoder/.style={
                draw, fill=gray!15,
                trapezium, trapezium angle=68,
                trapezium stretches body,
                minimum height=9mm, minimum width=15mm,
                shape border rotate=90,
                inner sep=1pt, font=\small\bfseries
            },
            arr/.style={-{Stealth[length=5pt]}, thick},
            darr/.style={{Stealth[length=5pt]}-{Stealth[length=5pt]}, thick, dashed},
            dasharr/.style={-{Stealth[length=5pt]}, thick, dashed},
            ]
            \node[font=\large\bfseries] at (2.8, 3.0) {Cross-Prediction (CP)};
            \node[inputnode] (x_cp) at (0, 1.2) {$\vx_i$};
            \node[inputnode] (y_cp) at (0, -0.5) {$\vy_i$};
            \node[encoder] (enc_cp) at (2.0, 1.2) {$\fx$};
            \node[latentnode] (z_cp) at (3.4, 1.2) {$\vz_x^{(i)}$};
            \node[decoder] (dec_cp) at (4.8, 1.2) {$f_{\mD}$};
            \draw[arr] (x_cp) -- (enc_cp);
            \draw[arr] (enc_cp) -- (z_cp);
            \draw[arr] (z_cp) -- (dec_cp);
            \draw[dasharr] (dec_cp.east) -- ++(0.4, 0) |- (y_cp.east);
            \draw[gray!50, thick, dashed] (6.8, -1.8) -- (6.8, 3.5);
            \node[font=\large\bfseries] at (10.2, 3.0) {Cross-Alignment (CA)};
            \node[inputnode] (x_ca) at (7.8, 1.5) {$\vx_i$};
            \node[encoder] (enc_w) at (9.6, 1.5) {$\fx$};
            \node[latentnode] (zx) at (11.4, 1.5) {$\vz_x^{(i)}$};
            \node[inputnode] (y_ca) at (7.8, -0.8) {$\vy_i$};
            \node[encoder] (enc_v) at (9.6, -0.8) {$\fy$};
            \node[latentnode] (zy) at (11.4, -0.8) {$\vz_y^{(i)}$};
            \draw[arr] (x_ca) -- (enc_w);
            \draw[arr] (enc_w) -- (zx);
            \draw[arr] (y_ca) -- (enc_v);
            \draw[arr] (enc_v) -- (zy);
            \draw[darr] (zx) -- (zy);
            \node[draw, rounded corners=4pt, dashed, gray!70, thick,
                  fit=(zx)(zy),
                  inner xsep=10pt, inner ysep=10pt] (sharedbox) {};
            \node[font=\footnotesize, gray!70, anchor=south] at (sharedbox.north) {shared latent space $\R^k$};
        \end{tikzpicture}%
        }
    \end{minipage}%
    \hfill
    \begin{minipage}[t]{0.32\linewidth}
        \vspace{0pt}
        \caption{Two multimodal learning paradigms studied in this work. \figleft~\textbf{Cross-prediction} (CP), ~\Cref{eq:cp}: an encoder $\fx$ maps modality $\vx$ to a latent code $\vz$, and a decoder $f_{\mD}$ reconstructs the paired target $\vy$. \figright~\textbf{Cross-alignment} (CA), ~\Cref{eq:ca}: encoders $\fx$ and $\fy$ project paired samples $(\vx_i, \vy_i)$ into a shared latent space where matched pairs are pulled together.}
        \label{fig:ca_cp_paradigms}
    \end{minipage}
\end{figure}
To understand when each objective succeeds or fails, we restrict to the
linear case, where \eqref{eq:ca} and \eqref{eq:cp} admit closed-form
solutions and the population geometry is fully tractable. This linear
analysis captures the core mechanism and provides our main analysis tool, and its
predictions transfer to the nonlinear regime in our experiments (\Cref{sec:experiments}).

\subsection{Linear analysis under a spiked model}
\label{sec:linear}

With linear encoders $f_X(\vx) = \mW\vx$, $f_Y(\vy) = \mV\vy$ for CA, and linear 
encoder $f_X(\vx) = \mE\vx$ and decoder $f_D(\vz) = \mD\vz$ for CP, both 
objectives admit closed-form solutions expressible through the SVDs of two 
modality-coupling matrices:
\begin{equation}
\mC := \mSxx^{-1/2} \mSxy \mSyy^{-1/2}, \qquad
\mA := \mSyx \mSxx^{-1/2},
\label{eq:coupling_matrices}
\end{equation}
where $\mathbf{S}_{xx}$, $\mathbf{S}_{yy}$, $\mathbf{S}_{xy}$ denote the (population)
(cross-)covariances. CA
projects onto the leading $k$ singular directions of $\mC$ (symmetric
whitening, equivalent to CCA \citep{Hotelling1936_CCA, Andrew2013_deepCCA});
CP projects onto the leading $k$ singular directions of $\mA$
(source-side whitening only, equivalent to truncated reduced-rank
regression (RRR) \citep{izenman1975, eckart_approximation_1936}). Although CCA and RRR
are classically connected (e.g., \cite{Donnat2024_CCA_RRR}), the two paradigms diverge once
structured cross-modal nuisance is present, which is the regime our
analysis characterizes.

\paragraph{Spiked model.}
To analyze recovery, we posit a signal-plus-noise model in which each
modality decomposes into $k$ shared signal coordinates and $d - k$
modality-specific nuisance coordinates. In suitable orthogonal bases, the
covariances are block diagonal:
\begin{equation}
\mSxx \;=\; \mathrm{diag}\!\bigl(\mK^2 + \mGamma_x^{(s)},\; \mGamma_x^{(n)}\bigr),
\quad
\mSyy \;=\; \mathrm{diag}\!\bigl(\mK^2 + \mGamma_y^{(s)},\; \mGamma_y^{(n)}\bigr),
\quad
\mSxy \;=\; \mathrm{diag}\!\bigl(\mK^2,\; \mGamma_{xy}\bigr),
\label{eq:spiked_model}
\end{equation}
where $\mK = \diag(\kappa_1, \ldots, \kappa_k)$ collects the cross-modal
signal strengths, $\mGamma_x^{(s)}, \mGamma_y^{(s)}$ are the view-specific
noise variances on the shared coordinates, $\mGamma_x^{(n)}, \mGamma_y^{(n)}$ are the view-specific noise variances on the nuisance  coordinates, and $\mGamma_{xy} = \diag(\eta_1, \ldots, \eta_{d-k})$ encodes
cross-modal nuisance correlation, with
$0 \leq \eta_j \leq \sqrt{\tilde\gamma_j^x \tilde\gamma_j^y}$. Full parameterization
is given in \Cref{app:closed_form}.

Under this model, the singular values of $\mC$ and $\mA$ decompose cleanly
into signal and nuisance contributions, yielding recovery conditions for
each objective.

\begin{proposition}[CA vs.\ CP separation]
\label{prop:separation}
Under the spiked model \eqref{eq:spiked_model}, the singular values of
$\mC$ and $\mA$ split into signal values
$\{\rho_i, \tau_i\}_{i \in \llbracket k \rrbracket}$ and nuisance values
$\{\nu_j, \xi_j\}_{j \in \llbracket d-k \rrbracket}$, given by
\begin{equation}
\rho_i = \frac{\kappa_i^2}{\sqrt{(\kappa_i^2 + \gamma_i^x)(\kappa_i^2 + \gamma_i^y)}},
\quad
\tau_i = \frac{\kappa_i^2}{\sqrt{\kappa_i^2 + \gamma_i^x}},
\quad
\nu_j = \frac{\eta_j}{\sqrt{\tilde\gamma_j^x \tilde\gamma_j^y}},
\quad
\xi_j = \frac{\eta_j}{\sqrt{\tilde\gamma_j^x}}.
\label{eq:singular_values}
\end{equation}
CA (resp.\ CP) recovers the shared signal subspace whenever its signal
singular values exceed its nuisance singular values. Defining the
\emph{separation ratios}
\begin{equation}
\DCA := \frac{\min_i \rho_i}{\max_j \nu_j}, \qquad
\DCP := \frac{\min_i \tau_i}{\max_j \xi_j},
\label{eq:separation_ratios}
\end{equation}
full recovery holds when $\DCA > 1$ (for CA) or $\DCP > 1$ (for CP). In the
homogeneous case ($\kappa_i \equiv \kappa$, $\gamma^y_i \equiv \gamma^y$,
$\tilde\gamma^y_j \equiv \tilde\gamma^y$), the two ratios satisfy
\begin{equation}
\DCA \;=\; \DCP \cdot \sqrt{\frac{\tilde\gamma^y}{\kappa^2 + \gamma^y}}.
\label{eq:ratio_identity}
\end{equation}
More generally, $\DCA/\DCP$ is monotonically non-decreasing in each
$\tilde\gamma^y_j$.
\end{proposition}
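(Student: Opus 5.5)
Because all three covariances in \eqref{eq:spiked_model} are block diagonal in the same bases, and in fact diagonal, I will compute $\mC$ and $\mA$ directly in those bases. Here $\mK^2$, $\mGamma_x^{(s)}$, $\mGamma_y^{(s)}$ are diagonal with entries $\kappa_i^2,\gamma_i^x,\gamma_i^y$, and $\mGamma_x^{(n)},\mGamma_y^{(n)},\mGamma_{xy}$ have entries $\tilde\gamma^x_j,\tilde\gamma^y_j,\eta_j$, as in \Cref{app:closed_form}.

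\textbf{Computing the matrices.} Inverse square roots of diagonal matrices act entrywise, so
\[
\mC = \mathrm{diag}\!\bigl(\kappa_i^2(\kappa_i^2+\gamma_i^x)^{-1/2}(\kappa_i^2+\gamma_i^y)^{-1/2},\; \eta_j(\tilde\gamma^x_j\tilde\gamma^y_j)^{-1/2}\bigr),
\]
\[
\mA = \mSyx\mSxx^{-1/2} = \mathrm{diag}\!\bigl(\kappa_i^2(\kappa_i^2+\gamma_i^x)^{-1/2},\; \eta_j(\tilde\gamma^x_j)^{-1/2}\bigr).
\]
All entries are nonnegative because $\eta_j\ge 0$. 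In the original coordinates $\mC$ and $\mA$ are therefore orthogonal conjugates of nonnegative diagonal matrices. Their singular values are exactly the listed entries $\rho_i,\nu_j$ and $\tau_i,\xi_j$, and the corresponding singular vectors are the basis vectors of the signal block (first $k$) or the nuisance block (last $d-k$). The bound $\eta_j\le\sqrt{\tilde\gamma_j^x\tilde\gamma_j^y}$ ensures $\nu_j\le 1$, which is consistent with PSD joint covariance.

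\textbf{Recovery criterion.} I will invoke the closed-form results (\Cref{thm:closed_form_ca} and its CP analogue in \Cref{app:closed_form}). These say that CA projects onto the top-$k$ singular subspace of $\mC$ and CP onto the top-$k$ singular subspace of $\mA$. The top-$k$ singular subspace is the signal block, and it is uniquely determined, exactly when every signal singular value strictly exceeds every nuisance singular value. This is equivalent to $\min_i\rho_i>\max_j\nu_j$, i.e.\ $\DCA>1$, and likewise $\DCP>1$ for CP. The main subtlety is the case of ties or $\max_j\nu_j=0$; I will treat the latter as $\Delta=+\infty$.

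\textbf{Homogeneous identity and monotonicity.} When $\kappa_i,\gamma^y_i,\tilde\gamma^y_j$ are constant, we get $\rho_i=\tau_i/\sqrt{\kappa^2+\gamma^y}$ and $\nu_j=\xi_j/\sqrt{\tilde\gamma^y}$. Hence $\min_i\rho_i=\min_i\tau_i/\sqrt{\kappa^2+\gamma^y}$ and $\max_j\nu_j=\max_j\xi_j/\sqrt{\tilde\gamma^y}$. Dividing gives \eqref{eq:ratio_identity}.

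For the general monotonicity claim, note that the $\tilde\gamma^y_j$ appear only in the $\nu_j=\xi_j/\sqrt{\tilde\gamma^y_j}$, and the signal terms $\rho_i,\tau_i$ and the $\xi_j$ do not depend on them. Increasing any $\tilde\gamma^y_j$ weakly decreases $\nu_j$, so $\max_j\nu_j$ is non-increasing. Thus $\DCA$ is non-decreasing while $\DCP$ is unchanged, and the ratio $\DCA/\DCP$ is non-decreasing in each $\tilde\gamma^y_j$.

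When increasing $\tilde\gamma^y_j$, I must keep $\eta_j$ fixed; this stays admissible because the constraint on $\eta_j$ only loosens. The step needing the most care is the recovery criterion in the presence of degenerate singular values.
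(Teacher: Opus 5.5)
Your proposal is correct and follows essentially the same route as the paper. It diagonalizes $\mC$ and $\mA$ blockwise in the $\mathbf{Q}_x,\mathbf{Q}_y$ bases, reads off the singular values, and invokes the closed-form CA/CP solutions for the recovery criterion; monotonicity then follows because $\DCP$ does not depend on $\tilde\gamma^y_j$ while $\nu_j$ decreases in it. Your use of the exact relations $\rho_i=\tau_i/\sqrt{\kappa_i^2+\gamma_i^y}$ and $\nu_j=\xi_j/\sqrt{\tilde\gamma_j^y}$ is in fact slightly sharper than the paper's appendix argument: it proves \eqref{eq:ratio_identity} under exactly the partial homogeneity stated in the proposition, whereas the paper's proof also assumes $\gamma^x_i$, $\tilde\gamma^x_j$ and $\eta_j$ are constant.
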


\begin{proposition}[Partial recovery]
\label{prop:partial_recovery}
Under the spiked model \eqref{eq:spiked_model}, the top-$k$ singular
vectors of $\mC$ (resp.\ $\mA$) contain at least
\begin{equation}
r_{\mathrm{CA}} := \bigl|\{i : \rho_i > \textstyle\max_j \nu_j\}\bigr|,
\qquad
r_{\mathrm{CP}} := \bigl|\{i : \tau_i > \textstyle\max_j \xi_j\}\bigr|
\end{equation}
shared signal directions.
\end{proposition}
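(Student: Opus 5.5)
The plan is to use the block-diagonal structure of the spiked model \eqref{eq:spiked_model}, which in the chosen orthogonal bases makes both coupling matrices $\mC$ and $\mA$ diagonal up to fixed orthogonal changes of basis. First, I will write $\mSxx^{-1/2}=\mathrm{diag}\bigl((\mK^2+\mGamma_x^{(s)})^{-1/2},(\mGamma_x^{(n)})^{-1/2}\bigr)$ and do the same for $\mSyy^{-1/2}$. Multiplying through shows that $\mC$ (resp.\ $\mA$) is, in these bases, a rectangular diagonal matrix. Its first $k$ diagonal entries are $\rho_i$ (resp.\ $\tau_i$), and these sit on the shared coordinates $e_i$. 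The remaining entries are $\nu_j$ (resp.\ $\xi_j$), and these sit on the nuisance coordinates. This is the computation already underlying \Cref{prop:separation}, so I will take \eqref{eq:singular_values} as given, together with the identification of each singular value with a specific basis vector. That identification is exactly what \Cref{prop:separation} gives us.

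Next, I will use the fact that for a diagonal matrix with nonnegative entries, the singular values are the entries themselves, and the singular vectors are the corresponding coordinate vectors. The top-$k$ singular subspace is then spanned by the coordinate vectors of the $k$ largest entries among the multiset $\{\rho_i\}\cup\{\nu_j\}$. Every index $i$ with $\rho_i>\max_j\nu_j$ strictly exceeds every nuisance entry. Fix such an $i$, and suppose $e_i$ were not among the top $k$. Then the top $k$ would consist of $k$ entries, each at least $\rho_i$. At most $k-1$ of these can be other signal entries, so at least one would have to be a nuisance entry $\nu_j\ge\rho_i>\max\nu$, which is a contradiction. Hence all $r_{\mathrm{CA}}$ such signal directions lie in the top-$k$ set. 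The same argument applies verbatim to $\mA$ with $\tau_i,\xi_j$, giving $r_{\mathrm{CP}}$.

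The main subtlety is ties and the non-uniqueness of singular vectors. When several entries are equal, the top-$k$ singular subspace is not uniquely defined. Ties can only occur among values not exceeding $\max_j\nu_j$, or among signal values themselves. The strict inequality $\rho_i>\max_j\nu_j$ handles this. The eigenspace of $\mC\mC^\top$ associated with eigenvalues strictly above $(\max_j\nu_j)^2$ is uniquely determined and is spanned only by signal coordinates. It has dimension $r_{\mathrm{CA}}$. Any choice of top-$k$ singular subspace must contain it, since $k\ge r_{\mathrm{CA}}$ and it collects the strictly largest eigenvalues. Therefore any valid choice of top-$k$ singular vectors spans a subspace containing $r_{\mathrm{CA}}$ shared signal directions. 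This is the claimed "at least", since further signal directions may also be selected whenever they beat the remaining nuisance values.
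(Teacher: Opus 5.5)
Your proof is correct and takes the same route as the paper. Both read off the diagonal singular values from the block structure, and both observe that any $\rho_i$ (resp.\ $\tau_i$) exceeding every nuisance value can be outranked by at most the $k-1$ other signal values, so it lands in the top $k$. Your added step, using the uniquely determined spectral subspace for eigenvalues above $(\max_j \nu_j)^2$, is a useful refinement the paper omits: it makes the conclusion hold regardless of how ties among singular values are broken.
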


Full recovery is $r = k$ (\Cref{prop:separation}); $r = 0$ corresponds to
complete failure; and the intermediate regime $0 < r < k$ is \emph{partial
recovery}, in which the learned representation is guaranteed to capture
at least $r$ signal directions mixed with nuisance. Partial recovery
arises only under heterogeneous signal strengths: in the homogeneous case
$\kappa_i \equiv \kappa$, $\gamma_i^x \equiv \gamma^x$, $\gamma_i^y \equiv \gamma^y$
(\Cref{fig:phase_diagram}) all signal
singular values coincide, so $r \in \{0, k\}$ and partial recovery
collapses to the binary regime of \Cref{prop:separation}.  As $\kappa_i$ become increasingly heterogeneous, the four-region phase diagram of \figref{fig:phase_diagram} smears into a graded continuum (\Cref{fig:partial_recovery}, ~\Cref{sec:additional_figs}).

\paragraph{Failure modes.}
\Cref{eq:singular_values} exposes a fundamental asymmetry. CA's nuisance
values $\nu_j$ are \emph{cross-modal correlation coefficients} in $[0, 1]$,
independent of nuisance variance: when any $\nu_j \to 1$, no signal
direction can match it under whitening (since $\rho_i < 1$ whenever any
modality-specific noise is present), so $\DCA < 1$ regardless of signal
strength. CP's nuisance values $\xi_j = \nu_j \sqrt{\tilde\gamma_j^y}$
depend on the target nuisance variance: large $\tilde\gamma_j^y$ amplifies
even moderate nuisance correlation into a recovery-breaking singular value
(here and in Figure~\ref{fig:phase_diagram}, we hold $\nu_j$ fixed when varying
$\tilde\gamma^y_j$, consistent with the $(\kappa, \nu)$ axes of the phase diagram).
CP is also asymmetric: swapping source and target replaces
$\tilde\gamma_j^y$ with $\tilde\gamma_j^x$, so the direction of prediction
matters. A direct corollary, demonstrated in \Cref{fig:ca_probe_vs_cp}, is that CP can achieve lower reconstruction MSE than CA while recovering the wrong subspace 
--- the MSE objective itself does not distinguish signal from nuisance. The relation of \eqref{eq:ratio_identity} makes the complementarity
explicit: $\DCA$ and $\DCP$ diverge as $\tilde\gamma_j^y$ grows, so a large
target-side nuisance favors CA, while a small target-side nuisance favors CP.
In the resulting phase diagram (\Cref{fig:phase_diagram}) the
$(\kappa, \nu)$ plane partitions into four regions --- \emph{Both},
\emph{CA only}, \emph{CP only}, \emph{Neither} --- with boundaries set by
$\DCA = 1$ and $\DCP = 1$.

\begin{keytakeaway}
    \textbf{Key takeaway.} These failure modes suggest that CA is preferable when nuisance correlation is moderate, modality quality is uncertain, and modality-specific noise is large. CP may be preferable, with the correct source-target orientation, when the signal is strong and the target noise is weak.
\end{keytakeaway}

\begin{figure}[t]
\centering
    \includegraphics[width=\linewidth]{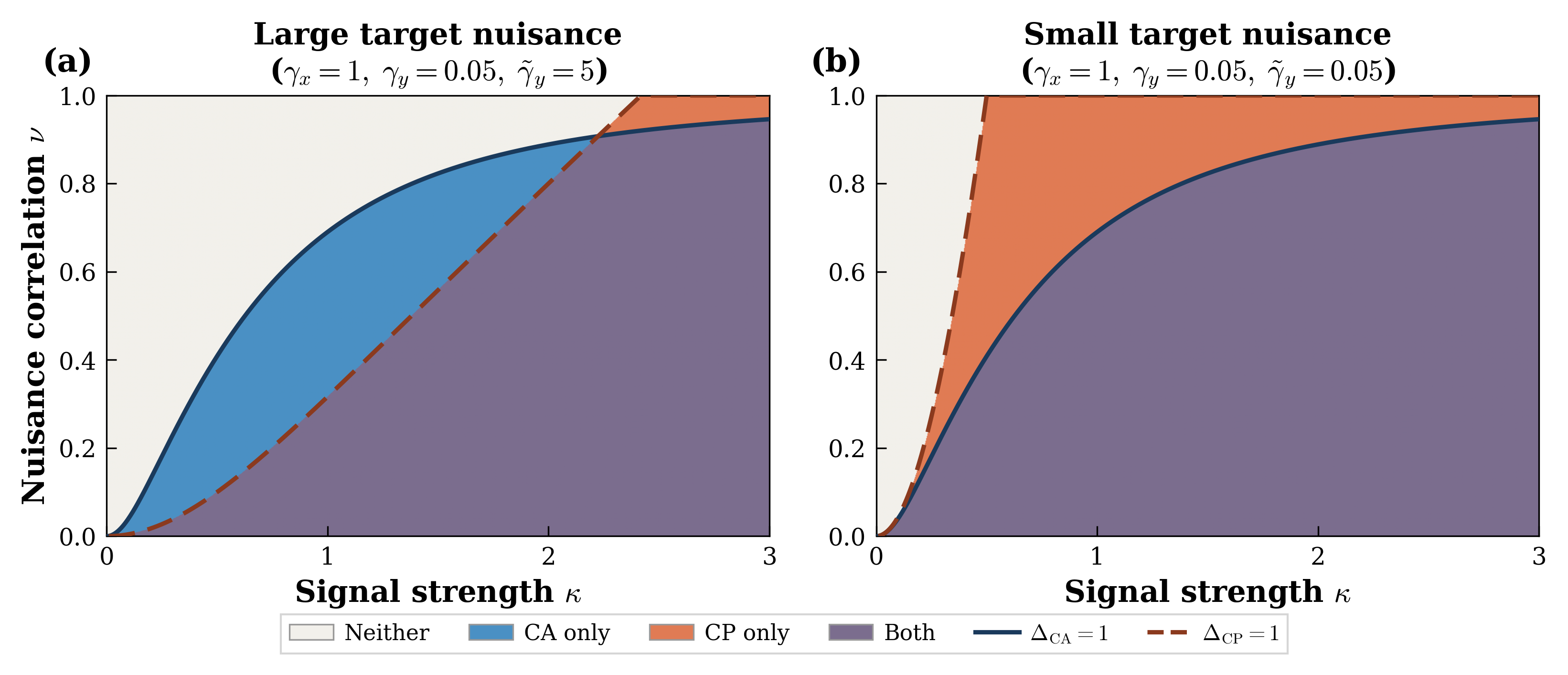}   
    \caption{Phase diagram for signal recovery in $(\kappa, \nu)$ space under the homogeneous model (all signal and noise components are equal). Solid and dashed lines respectively show the $\DCA = 1$ and $\DCP = 1$ boundaries from \Cref{prop:separation}.
    (a) Large target nuisance ($\tilde\gamma^y \gg \gamma^y$). (b) Small target noise ($\tilde\gamma^y \sim \gamma^y$). Phase diagrams for the non-homogeneous case with partial recoveries are shown in ~\Cref{fig:partial_recovery}.}
    \label{fig:phase_diagram}
\end{figure}

\paragraph{Redundant vs.\ Complementary Modalities.}
The failure modes of CA and CP can be understood through a distinction between two regimes of multimodal data. \emph{Redundant} modalities, such as image--caption 
pairs, where captions are written to describe image content---share dominant structure across views, corresponding to large $\kappa_i$ and small $\nu_j$ in 
our model, or the lower right corner in ~\Cref{fig:phase_diagram}. In this regime, the redundancy assumption underlying standard multimodal SSL holds. \emph{Complementary} modalities, by contrast, arise when each view provides a structurally distinct perspective on the same object, as in multi-sensor measurements in astrophysics and earth science, or multi-omics or multi-scale profiling in biology. Here, view-specific structure dominates the variance of each modality while cross-modal nuisance correlation is non-negligible, pushing toward large $\nu_j$ and small effective $\kappa_i$---precisely the Neither region (upper left corner) of Figure~\ref{fig:phase_diagram} where both paradigms fail. The intermediate region, which is represented by the center area in Figure~\ref{fig:phase_diagram} is noise dependent - CA is preferred when target noise is large, and CP is preferred when target noise is weak. 
\section{Experiments}\label{sec:experiments}
To test the validity of our theory in real-world problems, we conducted experiments across varying levels of complexity and controllability. The first is a linear experiment that directly implements the spiked model, followed by non-linear experiments with synthetic vision datasets, simulating multi-modality by projecting the scene onto two virtual cameras. Next, we use MS-COCO, a real image-caption multi-modality dataset. Finally, we verify our
predictions on two real scientific problems: an astrophysics multimodal experiment, and a
single-cell multi-omics pair. In all non-linear experiments, we use the VICReg approach \citep{Bardes2021_vicreg} as the CA objective, and MSE reconstruction as CP.
VICReg's variance and covariance terms are the soft form of the constraint
in \eqref{eq:ca}: the variance term keeps each coordinate of a view from
collapsing, and the covariance term decorrelates coordinates within a view,
together penalizing departures from $\tfrac{1}{n}\sum_i \bar\vz^{(i)} \bar\vz^{(i)\top} = \mI_k$, while the invariance term is the alignment loss itself. VICReg was shown to be a
formulation of DeepCCA \citep{Andrew2013_deepCCA}, the non-linear version of CCA
\citep[see e.g.,][]{Chapman2023_unified_cca}, and we compare the two
directly in our two synthetic experiments (see \Cref{sec:additional_figs}). We provide implementation details of all experiments in ~\Cref{sec:implementation_details}.


\subsection{Spiked Synthetic Data}\label{sec:linear_experiments} 

We first verify the theoretical predictions of \Cref{sec:solutions_and_bottlenecks} using closed-form solvers on finite-sample, synthetic data drawn from the spiked covariance model. This confirmed that cross-modal noise correlation breaks CP well before CA (Figure~\ref{fig:linear_E4}). The  empirical subspace distance shows that CP fails to recover the signal ($\text{dist} \to 1$) at $\nu \approx 0.15$, while CA maintains near-perfect recovery until $\nu \approx 0.75$ (Figure~\ref{fig:linear_E4}, left). Based on the theoretical separation ratios (Figure~\ref{fig:linear_E4}, right)$\Delta_{\mathrm{CP}}$ crosses the recovery threshold $\Delta = 1$ at a noise correlation roughly $5\times$ lower than $\Delta_{\mathrm{CA}}$. The wide gap between the two thresholds validates the complementary failure modes identified in ~\Cref{sec:solutions_and_bottlenecks}. We provide additional linear experiments in \Cref{sec:additional_figs}.

\begin{figure}[t]
    \centering
     \begin{minipage}{0.58\linewidth}
    \includegraphics[width=\linewidth]{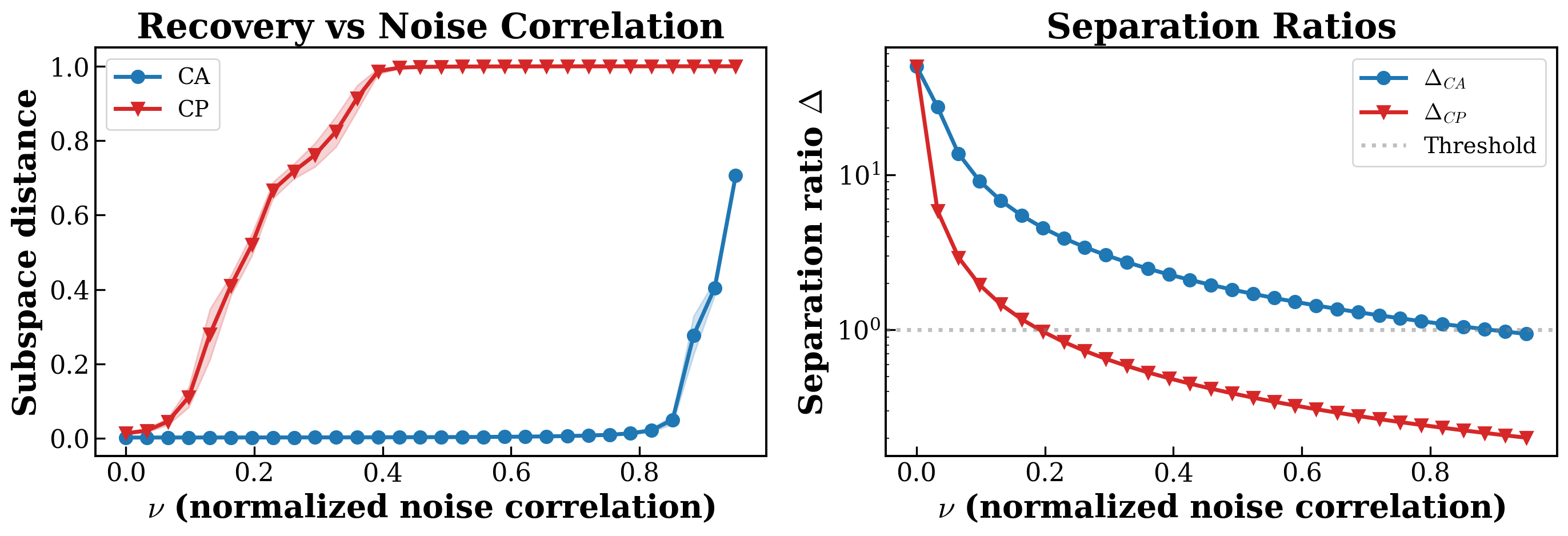}
    \end{minipage}\hfill
    \begin{minipage}{0.41\linewidth}
    \caption{Recovery as a function of normalized noise correlation $\nu$. \figleft~Subspace distance between the estimated and true signal subspace (lower is better; shading shows $\pm 1$ std over 20 trials). CP fails at $\nu \approx 0.15$ while CA remains robust until $\nu \approx 0.75$. \figright~Theoretical separation ratios $\Delta_{\mathrm{CA}}$ and $\Delta_{\mathrm{CP}}$ (log scale). The dashed line marks $\Delta = 1$; recovery succeeds above this threshold.}\label{fig:linear_E4}
    \end{minipage}
\end{figure}

\subsection{Stereo vision experiments}
\textbf{Stereo-dSprites}
We next tested whether the complementary failure modes persist in a nonlinear setting using \emph{Stereo-dSprites}, a synthetic stereo-vision benchmark, based on the dSprites dataset (\cite{dsprites17}), with controlled nuisance alignment that simulates a multimodal visual setting. Two virtual cameras observe a shared 2D object on a $64 \times 64$ image. Here, shape is the signal (low pixel variance but perfectly correlated across views) whereas world position is the nuisance (high pixel variance and highly, but imperfectly, correlated, with the correlation controlled by a camera jitter parameter $\sigma_{\text{jitter}}$). We define nuisance alignment as $\nu_{\max} = 1 - \sigma_{\text{jitter}}$, and evaluate downstream shape classification via linear probe (details in \Cref{sec:implementation_details}).\\
\textbf{Stereo-3DShapes}
We replicate the Stereo-dSprites protocol on \emph{Stereo-3DShapes} (based on \cite{3dshapes18}): RGB $64 \times 64$ stereo pairs of 3D-rendered objects (Cube, Cylinder, Sphere, Capsule) with controlled position jitter (details in \Cref{sec:implementation_details}). 

\begin{figure}[t]
    \centering
    \includegraphics[width=\linewidth]{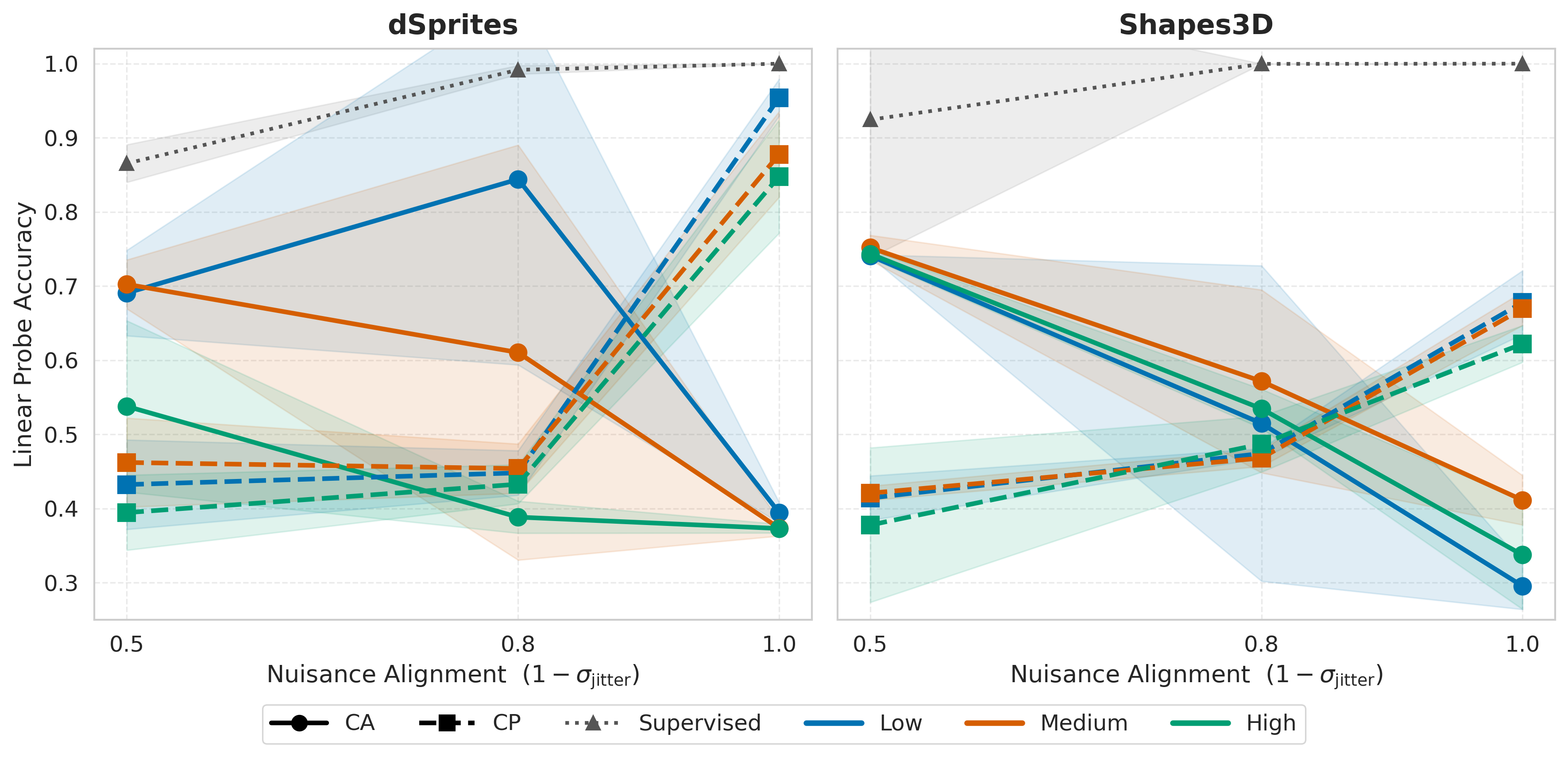}       
    \caption{Linear probe accuracy vs.\ nuisance alignment $\nu_{\max} = 1 - \sigma_{\text{jitter}}$. \figleft~Stereo-dSprites (3-class, grayscale, 100k samples). \figright~Stereo-3DShapes (4-class, RGB, 100k samples). In both settings, color represents weak noise levels. In both panels, the trade-off between the methods is clearly seen. CA (solid, circles) peaks at moderate-to-low alignment and collapses at full alignment; CP (dashed, squares) shows the opposite pattern. The crossover at $\nu_{\max} \approx 0.8$ is consistent across datasets. Lower absolute ceilings in 3DShapes reflect the harder discrimination task.}
    \label{fig:stereo_performance}        
\end{figure}

Both experiments shows the expected trade-off between objectives - CA fails with perfectly correlated noise and improves as the alignment decreases, while CP shows the opposite behavior (Figure \ref{fig:stereo_performance}). At full nuisance alignment ($\nu=1$), the cross-modal mapping is deterministic and CP's overcapacity bottleneck encodes both signal and nuisance without compression pressure, circumventing the theory's failure prediction. As soon as jitter breaks determinism ($\nu < 1$), CP is forced to compress, and the separation ratio governs recovery. These overall conclusions are also observed when we examine examples of latent features with different alignment values for CA and CP in the stereo-dSprites experiment with dimensionality reduction using UMAP~\citep{McInnes2018_umap} ((Figure \ref{fig:dsprites_umap}), colors: signal (shape); opacity: nuisance features (position)). The observed patterns align perfectly with the overall results, such that the opacity structure (Figure \ref{fig:dsprites_umap}) shows that in  CA and CP each succeed where the other fails, and when a method fails, the model primarily captures the nuisance features (position).       

\begin{figure}[t]
    \centering
    \centering
    \includegraphics[width=\linewidth]{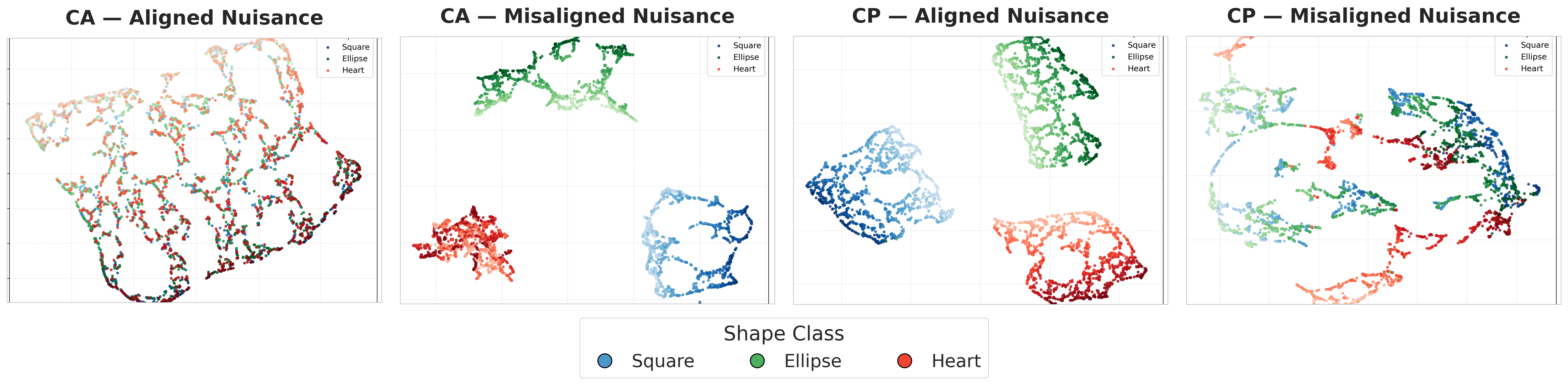}
    \caption{UMAP embeddings of learned representations of the stereo-dSprites experiment (color $=$ shape, intensity $=$ position). From left to right: CA with aligned noise ($\sigma_{\text{jitter}}=0$), CA with misaligned noise ($\sigma_{\text{jitter}}=0.5$), CP with aligned noise, and CP with misaligned noise. All experiments have the same modality-specific noise ($\sigma_{\text{noise}}=0.5$). Each method succeeds exactly where the other fails, and on failures, the models learn the nuisance.}
    \label{fig:dsprites_umap}
\end{figure}

\subsection{Image--Caption Experiments}
\label{sec:image_caption}

We extended the analysis to a real image--caption dataset: MS-COCO~\citep{lin2014_mscoco}, where captions are written to describe natural image content. As this is a true multimodal dataset, we used the natural caption--image pairing without artificial nuisance manipulation. 
We trained encoders from scratch (ResNet-18 for images, two-layer Transformer 
for captions) and varied modality-specific noise by applying visual style transforms 
of increasing strength to the image modality while keeping captions clean.
As expected, we observed asymmetric performance for CP (Figure~\ref{fig:coco_capswap}); 
$\mathrm{CP}_{I \to T}$ strongly dominates at low noise (${\sim}10$\,pp over CA), and 
degrades monotonically as image noise increases. The asymmetry matches the linear theory: the target covariance $\mathbf{S}_{yy}$ never enters $\mathbf{A}$, so image-side perturbations affect $\mathrm{CP}_{I \to T}$ but leave $\mathrm{CP}_{T \to I}$ (text source, unchanged) flat.
The slight rise of $\mathrm{CP}_{T \to I}$ is consistent with noisy reconstruction 
targets acting as a regularizer, a finite-capacity effect outside the linear theory. 
CA is insensitive throughout, consistent with $\mSigma_{yy}$ normalization absorbing 
modality-specific variance.

\subsection{Predicting Recovery Regimes}
\label{sec:regime_prediction}

We propose a lightweight supervised diagnostic that locates a paired dataset in the phase diagram before committing to cross-modal training. The diagnostic is supervised by design: a small labeled subsample 
classifies singular components of $\hat\C$ and $\hat\Ab$ as signal or nuisance, allowing direct estimation 
of $\hat\DCA$ and $\hat\DCP$ without recovering the latent parameters $(\kappa, \gamma, \tilde\gamma, \eta)$. The labeled budget is small relative to the scale of cross-modal training. the diagnostic is meant 
to inform, and the labels need not match the downstream target --- in \Cref{sec:astro}, regime prediction from $\log g$ correctly orders 
methods on binarity and age.

The estimation procedure is well-posed when the representation fed to the estimator approximately satisfies the linear spiked decomposition
of \Cref{sec:linear}: signal and nuisance directions must be meaningfully separable in the joint covariance structure of the paired
data. This condition holds most directly in \emph{two-stage} multimodal pipelines, where each modality is first encoded independently by a unimodal model, and the cross-modal objective operates on these frozen representations. The unimodal features are
both the actual inputs to the cross-modal model and a representation in which signal and nuisance can be meaningfully separated; Two-stage pipelines are the dominant paradigm in scientific
multimodal learning and in large foundation-model stacks where modality-specific encoders are pretrained independently. In \emph{single-stage} pipelines, the encoder and cross-modal objective are optimized jointly from raw data, and no unimodal representation
satisfying the spiked decomposition exists prior to training. Applying the estimator to compact proxies of the raw inputs (e.g., pixel-PCA) is possible, but the resulting regime predictions are less reliable.

\begin{figure}[t]
    \centering
        \centering
        \begin{minipage}{0.33\linewidth}
        \includegraphics[width=\linewidth]{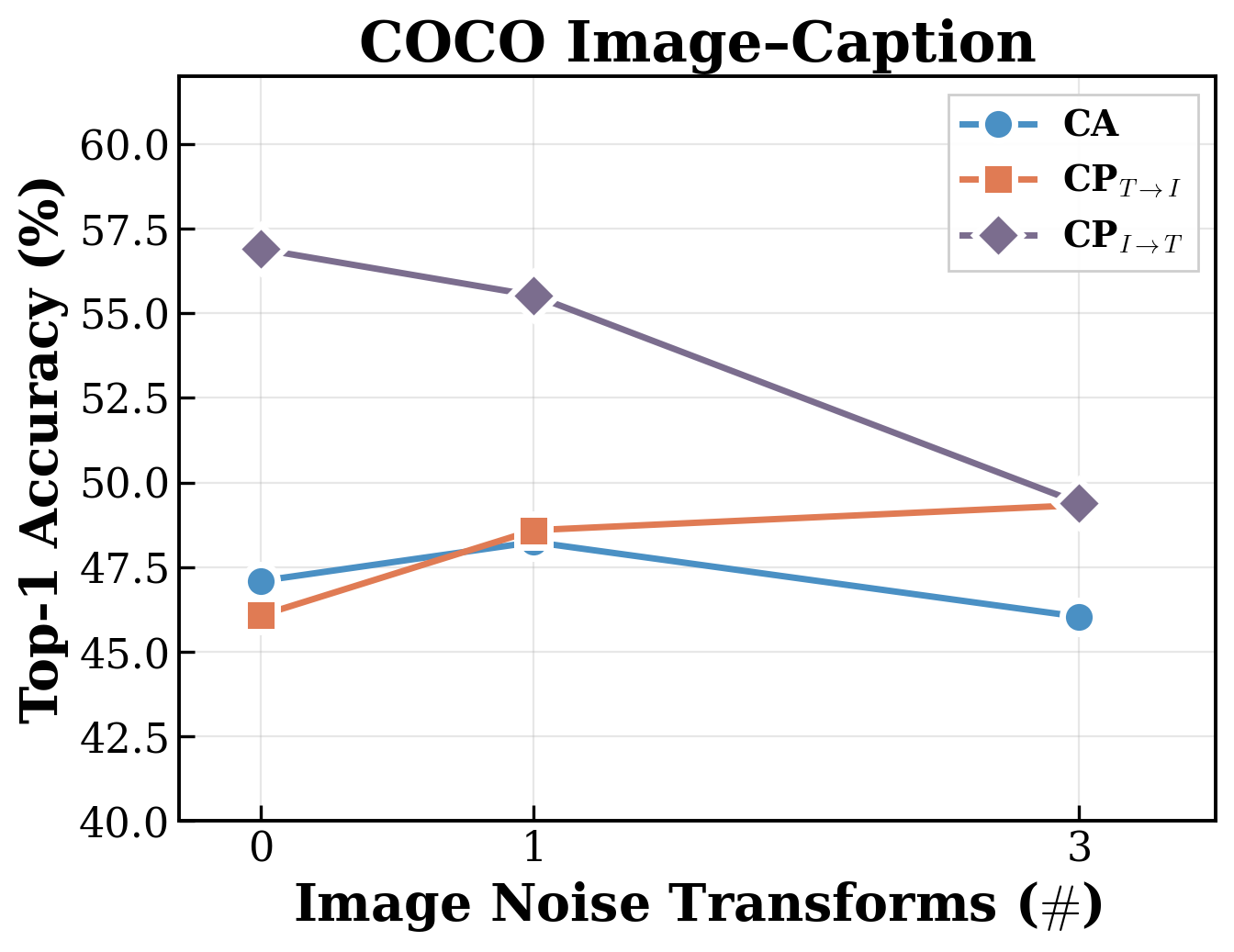}
        \end{minipage}
    \begin{minipage}{0.66\linewidth}
    \caption{Top-1 accuracy vs.\ image style transform strength for MS-COCO experiment. CP shows an asymmetric nature: prediction of image from text results in similar performance as CA but prediction of text from an image results in much better performance. Both approaches converge to the same accuracy when image noise is high.}
    \label{fig:coco_capswap}
    \end{minipage}
\end{figure}

\subsection{Real Astrophysical Data (LAMOST $\times$ Kepler/TESS)}
\label{sec:astro}
Finally, we validated regime estimation on real astrophysical data, pairing
ground-based LAMOST (\cite{Zhao2012_LAMOST}) spectra (2048-dim encoder) with space-based photometry
from two instruments: Kepler (\cite{Mathur_2017_kepler_dr25}) and TESS (\cite{Ricker2015_TESS}) (1024-dim encoders), using frozen pretrained features \cite{Kamai2025_desa} with lightweight projection/prediction
heads (\Cref{sec:implementation_details}). We estimated recovery regimes
from a labeled subsample using surface gravity ($\log g$), following
\Cref{alg:phase_estimation}. Downstream evaluation covered three physically 
distinct targets: binarity, $\log g$, and age. Binarity and age are encoded through modality-specific 
mechanisms (spectroscopic radial-velocity variations \textit{vs}.\ photometric eclipses for binarity; isochrone \textit{vs}. gyrochronology for age), so agreement between a $\log g$-based regime prediction and behavior on binarity and age 
tests whether the separation ratios capture the geometry of the modality pair rather 
than a task-specific artifact.
The results show that the regime predictions hold as \emph{regime-level} statements across all targets, with a
revealing asymmetry in how each regime manifests (\Cref{tab:astro_results}, 5 seeds per cell). 

\emph{Kepler (Both).} On every target, at least one of \{CA, CP\} matches or
exceeds the best unimodal baseline, but the winner rotates with task.
CP stays at the LAMOST ceiling where LAMOST dominates ($\log g$, binarity); CA captures photometric signal where
LAMOST is weak (age: $+0.19$ $R^2$ over LAMOST). The 'Both' prediction should be read as \emph{some cross-modal 
objective helps on every task}, not as both objectives helping uniformly,
consistent with $\hat{\DCA}$ and $\hat{\DCP}$ being very different.

\emph{TESS (Neither).} No cross-modal method exceeds LAMOST alone on any
target, and the gap is well beyond seed variance. Moreover, we have
$\hat r_{\mathrm{CA}} = \hat r_{\mathrm{CP}} = 0$, the complete-failure form of the
Neither regime. The prediction holds uniformly, with no task-level exceptions.
\Cref{fig:phase_diagnostics} visualizes the underlying singular-value 
decompositions of $\hat\C$ and $\hat\Ab$ for both pairs.

\paragraph{CP direction asymmetry.}CP's recovery condition $\Delta_{\mathrm{CP}}$ never involves the target covariance $\mathbf{S}_{yy}$, which does not enter $\mathbf{A}$, so swapping source and target
produces a structurally different recovery condition: the preferred direction is the one 
in which the source modality more directly encodes the task signal. For $\log g$ 
and binarity, spectra encode the signal more directly than photometry (through 
absorption-line broadening rather than granulation/oscillation) so CP forward 
(spectra $\to$ photometry) succeeds while CP$_{\rm rev}$ (photometry $\to$ 
spectra) fails. For age, photometric rotation periods provide a more direct 
signal via gyrochronology than spectroscopic activity indicators, so the 
preferred direction reverses: CP$_{\rm rev}$ on age ($R^2 = 0.497$) outperforms 
forward CP.

\begin{table}[h]
\centering
\caption{Astrophysical cross-modal results, mean $\pm$ std over 5 seeds.
Best per row in \textbf{bold} (ties within seed std co-bolded).
\emph{Kepler (Both, $\hat{\DCA}{=}1.13$, $\hat{\DCP}{=}2.22$):} at least
one cross-modal method matches or beats the best unimodal baseline on
every target; CP preserves LAMOST's ceiling where LAMOST dominates,
CA captures photometric signal where LAMOST is weak.
\emph{TESS (Neither, $\hat r_{\mathrm{CA}} = \hat r_{\mathrm{CP}} = 0$):} no cross-modal
method beats LAMOST-only on any target. CP$_{\rm rev}$ (photometry $\to$ spectra) fails on tasks where spectra 
carry the more direct signal ($\log g$, binarity), but outperforms forward 
CP on age, where photometric rotation provides a more direct gyrochronological 
signal — the same source-quality principle in both directions.}
\label{tab:astro_results}
\small
\setlength{\tabcolsep}{4pt}
\begin{tabular}{l c c c c c}
\toprule
Task & CA & CP & CP$_\mathrm{rev}$ & LAMOST & Photometry \\
\midrule
\multicolumn{6}{l}{\textbf{LAMOST $\times$ Kepler} --- Both regime} \\
\midrule
Binarity (bal.\ acc.) & $0.802{\pm}0.009$ & $\mathbf{0.814{\pm}0.006}$ & $0.751{\pm}0.004$ & $\mathbf{0.814}$ & $0.731$ \\
$\log g$      ($R^2$) & $0.956{\pm}0.003$ & $\mathbf{0.976{\pm}0.001}$ & $0.639{\pm}0.004$ & $\mathbf{0.977}$ & $0.542$ \\
Age           ($R^2$) & $\mathbf{0.620{\pm}0.001}$ & $0.434{\pm}0.006$ & $0.497{\pm}0.039$ & $0.431$ & $0.470$ \\
\midrule
\multicolumn{6}{l}{\textbf{LAMOST $\times$ TESS} --- Neither regime} \\
\midrule
Binarity (bal.\ acc.) & $0.756{\pm}0.022$ & $0.763{\pm}0.011$ & $0.626{\pm}0.010$ & $\mathbf{0.779}$ & $0.604$ \\
$\log g$      ($R^2$) & $0.929{\pm}0.005$ & $0.939{\pm}0.001$ & $-0.312{\pm}0.001$ & $\mathbf{0.942}$ & $-0.312$ \\
Age           ($R^2$) & $0.431{\pm}0.029$ & $0.396{\pm}0.064$ & $-0.072{\pm}0.004$ & $\mathbf{0.503}$ & $-0.037$ \\
\bottomrule
\end{tabular}
\end{table}

\subsection{Single-Cell Multi-Omics (BMMC CITE-seq)}
\label{sec:bmmc}

We test whether the diagnostic transfers to a structurally different source of
correlated nuisance, and apply an identical protocol to a single-cell multi-omics pair.
BMMC CITE-seq measures RNA expression and surface-protein abundance in the same
$90$k bone-marrow mononuclear cells \citep{Luecken2021ASF}, with $44$ annotated cell
types as the downstream target. The correlated nuisance here is donor, site, and batch structure, which contaminates both assays of a given cell and is therefore shared across the pair. RNA is encoded by scGPT \citep{cui_scgpt_2024}, a pretrained single-cell transformer, held
frozen throughout; the protein modality enters as CLR-normalized counts through a small
MLP trained jointly with the cross-modal objective. We keep the protein side single-stage
deliberately: with $134$ measured proteins, CLR counts are already a compact
representation. Pretraining the protein encoder leaves the regime, the method ordering, and the downstream results unchanged (\Cref{sec:implementation_details}). The diagnostic runs on these same representations, reduced to $100$ principal components per modality. 

The diagnostic predicts \emph{Neither} in all five seeds
($\hat\Delta_{\mathrm{CA}} = 0.52 \pm 0.13$, $\hat\Delta_{\mathrm{CP}} = 0.33 \pm 0.10$),
but unlike TESS the recovery counts are non-zero: $0 < \hat r < \hat k$ in both spectra in
every seed ($\hat r_{\mathrm{CA}} = 13.0 \pm 3.3$ of $\hat k_{\mathrm{CA}} = 20.4 \pm 0.5$;
$\hat r_{\mathrm{CP}} = 12.6 \pm 3.2$ of $\hat k_{\mathrm{CP}} = 22.4 \pm 1.0$). This is
partial recovery in the sense of \Cref{prop:partial_recovery}: full recovery fails,
but part of the shared signal still clears the nuisance floor. The downstream results
(\Cref{tab:bmmc}) track this intermediate reading. The predicted ordering
$\mathrm{CA} > \mathrm{CP} > \mathrm{CP}_{\mathrm{rev}}$ holds in every seed. Comparing
each method to the RNA-only baseline within seeds, CP sits below it in $4/5$ seeds
($-0.011 \pm 0.011$), consistent with the variance trap, while CA sits above it in $4/5$
seeds by a margin comparable to its own seed-to-seed spread
($+0.023 \pm 0.026$): cross-modal training neither clearly helps nor clearly hurts. The
two Neither cases thus separate the regime's two manifestations: $\hat r = 0$ is complete
failure, where the stronger modality alone is the best available representation, while
$0 < \hat r < \hat k$ leaves a partially recovered subspace whose downstream value the
separation ratios do not by themselves determine.

\begin{keytakeaway}
   \textbf{Key takeaway.} Estimating effective recovery regimes is a practical and feasible analysis for real-world multimodal problems. It can indicate whether cross-modal learning is likely to succeed, identify which modality is the informative bottleneck, and guide the
    choice of objective.
\end{keytakeaway}

\begin{table}[H]
\centering
\caption{BMMC CITE-seq, $44$-class cell-type balanced accuracy, mean $\pm$ std over $5$
seeds; the last column gives the within-seed difference against the RNA-only baseline,
with the number of seeds in which it is positive. Predicted regime: Neither with partial
recovery in $5/5$ seeds ($\hat\Delta_{\mathrm{CA}} = 0.52 \pm 0.13$,
$\hat\Delta_{\mathrm{CP}} = 0.33 \pm 0.10$; $0 < \hat r < \hat k$ in both spectra in every
seed). The ordering $\mathrm{CA} > \mathrm{CP} > \mathrm{CP}_{\mathrm{rev}}$ holds in every
seed, and the paired differences order the same way: $\mathrm{CP}_{\mathrm{rev}}$ is below
RNA-only in every seed, CP in four of five, while CA is above it by a margin comparable to
its own spread.}
\label{tab:bmmc}
\small
\begin{tabular}{lcc}
\toprule
Representation & Balanced accuracy & $\Delta$ vs.\ RNA-only \\
\midrule
RNA only (strong) & $0.584 \pm 0.043$ & --- \\
ADT only (weak)   & $0.558 \pm 0.040$ & --- \\
\midrule
CA                & $0.606 \pm 0.030$ & $+0.023 \pm 0.026$ \quad ($4/5$) \\
CP                & $0.572 \pm 0.034$ & $-0.011 \pm 0.011$ \quad ($1/5$) \\
$\mathrm{CP}_{\mathrm{rev}}$ & $0.524 \pm 0.048$ & $-0.060 \pm 0.024$ \quad ($0/5$) \\
\bottomrule
\end{tabular}
\end{table}

\section{Conclusion}\label{sec:conclusion}
We studied cross-modal alignment and cross-modal prediction in a unified linear framework,
deriving recovery conditions governed by separation ratios $\DCA$ and $\DCP$ that
partition multimodal problems into four regimes, refined by the recovery count $r$. A data-driven estimation of these quantities identifies the
preferred objective and prediction direction from a small labeled subsample, before any
cross-modal training, and identifies the case $r = 0$ in which no objective in the CA/CP
family recovers shared signal above the nuisance floor.
Experiments span synthetic data, stereo-vision benchmarks, and image--caption pairs, with
the sharpest validation on real scientific data: the same spectroscopic encoder paired
with two photometric instruments of differing quality yields two distinct predicted
regimes, both confirmed across multiple downstream targets, and the predicted CP direction
asymmetry is confirmed on all tasks. A single-cell multi-omics pair, where the shared nuisance is donor and batch structure rather than instrument systematics, reproduces the Neither prediction in a structurally different domain.

The Neither regime is the most important open problem raised by this work, the natural
habitat of complementary scientific modalities, where each view is structurally distinct
yet the shared signal does not clear the nuisance floor. Our two instances show its two
faces: complete failure ($r = 0$), where the stronger modality alone is the best available
representation, and partial recovery ($0 < r < k$), where some signal is recovered but
full recovery fails and the downstream value of the recovered subspace is not settled by
the separation ratios. Escaping it likely requires objectives that go beyond pairwise
cross-covariance --- higher-order structure, auxiliary supervision, or modality-specific
priors. We hope the phase diagram introduced here provides a principled starting point to
solve it.

\bibliography{verified_refs}
\newpage

\appendix

\section{Closed-form solutions and spiked model derivations}
\label{app:closed_form}

\subsection{Full statement of closed-form solutions}
\label{app:closed_form_statement}

\begin{theorem}[Closed-form solutions for CA]
\label{thm:closed_form_ca}
Assume $\mS_{xx}$ and $\mS_{yy}$ are positive definite, and let
$k \leq \min(d_x, d_y)$.
Let $\mC = \mP \mPhi \mQ^\top$ be the SVD of
$\mC \coloneqq \mS_{xx}^{-1/2} \mS_{xy} \mS_{yy}^{-1/2}$ with singular
values $\phi_1 \geq \cdots \geq \phi_{\min(d_x,d_y)} \geq 0$.
The minimizers of \eqref{eq:ca} with linear encoders are
\begin{equation}
\mW^\star = \mU \mP_k^\top \mS_{xx}^{-1/2},
\quad
\mV^\star = \mU \mQ_k^\top \mS_{yy}^{-1/2},
\label{eq:ca_linear_solution}
\end{equation}
where $\mP_k, \mQ_k$ contain the leading $k$ columns and
$\mU \in \R^{k \times k}$ is an arbitrary orthogonal matrix. 
\end{theorem}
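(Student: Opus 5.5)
We reduce the constrained problem \eqref{eq:ca} to a trace maximization over a Stiefel manifold and then apply von Neumann's trace inequality. With linear encoders $f_X(\vx)=\mW\vx$ and $f_Y(\vy)=\mV\vy$, and working with centered population moments, the constraints become
\[
\mW\mS_{xx}\mW^\top=\mI_k,\qquad \mV\mS_{yy}\mV^\top=\mI_k .
\]
Expanding the objective gives
\[
\operatorname{tr}(\mW\mS_{xx}\mW^\top)+\operatorname{tr}(\mV\mS_{yy}\mV^\top)-2\operatorname{tr}(\mW\mS_{xy}\mV^\top)=2k-2\operatorname{tr}(\mW\mS_{xy}\mV^\top).
\]
(If the codes are not centered, the mean term $\|\mW\bar\vx-\mV\bar\vy\|^2$ is an extra nonnegative term. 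It vanishes for centered data, and we either center or note that it only adds a constant absorbed by a bias.) Minimizing CA is therefore equivalent to maximizing $\operatorname{tr}(\mW\mS_{xy}\mV^\top)$ under the two whitening constraints.

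\textbf{Change of variables.} Set $\tilde\mW:=\mW\mS_{xx}^{1/2}\in\R^{k\times d_x}$ and $\tilde\mV:=\mV\mS_{yy}^{1/2}$. This is a bijection because $\mS_{xx},\mS_{yy}\succ0$. The constraints become $\tilde\mW\tilde\mW^\top=\tilde\mV\tilde\mV^\top=\mI_k$, so both matrices have orthonormal rows. The objective becomes $\operatorname{tr}(\tilde\mW\mC\tilde\mV^\top)=\operatorname{tr}(\mC\tilde\mV^\top\tilde\mW)$.

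\textbf{Upper bound.} The matrix $\mM:=\tilde\mV^\top\tilde\mW$ is $d_y\times d_x$, has rank at most $k$, and has all singular values in $[0,1]$, being a product of partial isometries. Von Neumann's trace inequality then gives
\[
\operatorname{tr}(\mC\mM)\le\sum_i\phi_i\,\sigma_i(\mM)\le\sum_{i=1}^k\phi_i .
\]
The choice $\tilde\mW=\mU\mP_k^\top$, $\tilde\mV=\mU\mQ_k^\top$ attains this bound, since $\operatorname{tr}(\mU\mP_k^\top\mP\mPhi\mQ^\top\mQ_k\mU^\top)=\operatorname{tr}(\mPhi_k)$. Undoing the change of variables yields \eqref{eq:ca_linear_solution}. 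Invariance under a common orthogonal $\mU$ is immediate, because replacing $(\tilde\mW,\tilde\mV)$ by $(\mU\tilde\mW,\mU\tilde\mV)$ preserves both the constraints and the objective.

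\textbf{Main obstacle: characterizing all minimizers.} The hard part is showing that every minimizer has the stated form, not merely that these points are optimal. The plan is to analyze the equality cases in the chain above.
\begin{itemize}
\item Equality in $\sum_i\phi_i\sigma_i(\mM)\le\sum_{i\le k}\phi_i$ forces $\sigma_1(\mM)=\dots=\sigma_k(\mM)=1$ whenever $\phi_k>0$. Hence $\tilde\mV^\top\tilde\mW$ is a rank-$k$ partial isometry, which implies $\tilde\mV=\mU'\tilde\mW$ in the aligned sense: $\tilde\mW\tilde\mV^\top$ is orthogonal.
\item Writing $\tilde\mW\tilde\mV^\top$ as an orthogonal matrix $\mR$, the objective reduces to $\operatorname{tr}(\tilde\mW\mC\tilde\mV^\top)$ with row spaces constrained. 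By the equality condition of von Neumann's inequality (equivalently, Ky Fan's maximum principle for $\operatorname{tr}(\mX^\top\mC\mY)$ with orthonormal $\mX,\mY$), the row space of $\tilde\mW$ must be spanned by the top-$k$ left singular vectors of $\mC$. Likewise, the row space of $\tilde\mV$ must be spanned by the top-$k$ right singular vectors, with matching pairing.
\end{itemize}
This gives $\tilde\mW=\mU\mP_k^\top$ and $\tilde\mV=\mU\mQ_k^\top$. Uniqueness holds under the gap assumption $\phi_k>\phi_{k+1}$, which should be stated explicitly. Without the gap, $\mP_k,\mQ_k$ are understood as any admissible choice of leading singular vectors, consistent with the SVD being non-unique. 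I would also take care with zero singular values ($\phi_k=0$), where the pairing degenerates; these can be absorbed into the same convention.
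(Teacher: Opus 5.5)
Your optimality argument is exactly the paper's proof: the reduction to $2k-2\operatorname{tr}(\mW\mS_{xy}\mV^\top)$, the whitening substitution, the von Neumann bound through the product of partial isometries, and attainment at $(\mU\mP_k^\top,\mU\mQ_k^\top)$. The paper stops there. It never shows that \emph{every} minimizer has this form, so you are right that the converse needs its own argument, together with either the gap $\phi_k>\phi_{k+1}$ or the convention that $\mP_k,\mQ_k$ range over all SVDs of $\mC$.

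Your first bullet for that converse is wrong, however. For every feasible pair, $\mM^\top\mM=\tilde\mW^\top\tilde\mV\tilde\mV^\top\tilde\mW=\tilde\mW^\top\tilde\mW$ is a rank-$k$ orthogonal projector. So $\sigma_1(\mM)=\dots=\sigma_k(\mM)=1$ always holds, and your second inequality is always an equality; it says nothing about optimality. The conclusion that $\tilde\mW\tilde\mV^\top$ is orthogonal is also ill-typed when $d_x\neq d_y$. It is false even when $d_x=d_y$: with $\mS_{xx}=\mS_{yy}=\mI_2$, $\mS_{xy}=\tfrac12 e_1e_2^\top$ and $k=1$, the pair $\tilde\mW=e_1^\top$, $\tilde\mV=e_2^\top$ is a minimizer but $\tilde\mW\tilde\mV^\top=0$. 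The two encoders are coupled only through $\mC$, not directly.

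All the information sits in the von Neumann step. One way to extract it is to fix $\tilde\mV$. Then $\operatorname{tr}(\tilde\mW\mC\tilde\mV^\top)\le\|\mC\tilde\mV^\top\|_*\le\sum_{i\le k}\phi_i$, since $\sigma_i(\mC\tilde\mV^\top)\le\phi_i$. Equality at an optimum forces $\sigma_i(\mC\tilde\mV^\top)=\phi_i$ for $i\le k$, hence $\operatorname{tr}(\tilde\mV\mC^\top\mC\tilde\mV^\top)=\sum_{i\le k}\phi_i^2$. The Ky Fan equality case with $\phi_k>\phi_{k+1}$ then gives $\tilde\mV=\mU_2\mQ_k^\top$, and symmetrically $\tilde\mW=\mU_1\mP_k^\top$. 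The objective becomes $\sum_{i\le k}\phi_i\,(\mU_2^\top\mU_1)_{ii}$. When $\phi_k>0$, this reaches $\sum_{i\le k}\phi_i$ only if the orthogonal matrix $\mU_2^\top\mU_1$ has unit diagonal, i.e.\ $\mU_1=\mU_2$. This is the ``matching pairing'' that your second bullet asserts without proof.
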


\begin{theorem}[Closed-form solutions for CP]
\label{thm:closed_form_cp}
Let $\mA = \mU_{\!A} \mSigma \mV_{\!A}^\top$ be the SVD of
$\mA := \mSyx \mSxx^{-1/2}$ with $\sigma_1 \geq \cdots \geq \sigma_r > 0$.
The composed map $\mB := \mD \mE$ at a minimizer of \eqref{eq:cp} with
linear encoder and decoder is
\begin{equation}
\mB^\star = \mU_{\!A,k} \, \mSigma_k \, \mV_{\!A,k}^\top \, \mSxx^{-1/2}.
\end{equation}\label{eq:cp_linear_solutoin}
The factorization $\mB^\star = \mD^\star \mE^\star$ is non-unique:
for any invertible $\mM \in \R^{k \times k}$, $(\mD^\star \mM, \mM^{-1} \mE^\star)$
yields the same composed map.
\end{theorem}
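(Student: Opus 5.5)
The statement to prove is \Cref{thm:closed_form_cp}. The plan is to reduce the linear version of \eqref{eq:cp} to a low-rank approximation problem in a whitened coordinate system and then apply Eckart--Young. With $f_X(\vx)=\mE\vx$ and $f_D(\vz)=\mD\vz$, the objective depends on $(\mD,\mE)$ only through $\mB=\mD\mE$, a $d_y\times d_x$ matrix of rank at most $k$. Conversely, every such $\mB$ factors as $\mD\mE$ with $\mE\in\R^{k\times d_x}$ and $\mD\in\R^{d_y\times k}$. So it suffices to solve
\[
\min_{\operatorname{rank}\mB\le k}\ \mathbb{E}\|\vy-\mB\vx\|^2 .
\]
Here I work with the population (or empirical) second moments of centered data, assuming $\mSxx\succ 0$.

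First I expand the loss:
\[
\mathbb{E}\|\vy-\mB\vx\|^2=\operatorname{tr}\mSyy-2\operatorname{tr}(\mB\mSxy)+\operatorname{tr}(\mB\mSxx\mB^\top).
\]
Next I substitute $\mG:=\mB\mSxx^{1/2}$. Since $\mSxx^{1/2}$ is invertible, $\operatorname{rank}\mG=\operatorname{rank}\mB$. Completing the square, and using $\mSyx=\mSxy^\top$, gives
\[
\operatorname{tr}(\mG\mG^\top)-2\operatorname{tr}(\mG\,\mSxx^{-1/2}\mSxy)=\|\mG-\mA\|_F^2-\|\mA\|_F^2,
\qquad \mA=\mSyx\mSxx^{-1/2}.
\]
The problem is therefore equivalent to $\min_{\operatorname{rank}\mG\le k}\|\mG-\mA\|_F^2$. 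By Eckart--Young--Mirsky, a minimizer is the truncated SVD $\mG^\star=\mU_{A,k}\mSigma_k\mV_{A,k}^\top$. It is unique whenever $\sigma_k>\sigma_{k+1}$; otherwise the statement should be read as holding for any choice of leading singular vectors. Undoing the substitution gives $\mB^\star=\mG^\star\mSxx^{-1/2}$, which is the claimed formula.

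Finally, the non-uniqueness claim follows directly: $(\mD\mM)(\mM^{-1}\mE)=\mD\mE$ for any invertible $\mM$. An explicit minimizing pair is $\mE^\star=\mV_{A,k}^\top\mSxx^{-1/2}$ and $\mD^\star=\mU_{A,k}\mSigma_k$.

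The only delicate points are the following. One is the reparametrization claim that optimizing over factor pairs equals optimizing over rank-$\le k$ matrices; this is immediate, since $\mD\mE$ has rank at most $k$ and any rank-$\le k$ matrix admits such a factorization. The other is the case where $\operatorname{rank}\mA<k$: then $\mSigma_k$ contains zeros, the formula still holds, and $\mB^\star$ has rank $r$. No genuine obstacle is expected. The main care point is writing the completed square correctly, with the cross term $\operatorname{tr}(\mG\mA^\top)$ and $\mA^\top=\mSxx^{-1/2}\mSxy$.
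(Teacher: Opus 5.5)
Your proposal is correct and follows the paper's proof essentially step for step. You reduce to a rank-$\le k$ composed map $\mB$, substitute $\mB\mSxx^{1/2}$ to complete the square as $\|\mB\mSxx^{1/2} - \mA\|_F^2$ plus a constant, apply Eckart--Young--Mirsky, transform back, and get non-uniqueness from $(\mD\mM)(\mM^{-1}\mE) = \mD\mE$. Your extra remarks on ties at $\sigma_k = \sigma_{k+1}$ and on $\operatorname{rank}\mA < k$ match the paper's ``assuming distinct singular values'' caveat and make the statement slightly more careful.
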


\subsection{Proof of \Cref{thm:closed_form_ca}}\label{app:proof_ca}
For linear encoders the constraint reads
$\mW \mS_{xx} \mW^\top = \mV \mS_{yy} \mV^\top = \mI_k$, so
$\tfrac{1}{n}\sum_i \|\mW \vx_i\|^2 = \tfrac{1}{n}\sum_i \|\mV \vy_i\|^2 = k$
and the objective reduces to
\begin{align}
    \min_{\mW,\mV} \quad 2k - 2\tr{(\mW \mS_{xy} \mV^\top)}
    \quad \text{s.t.} \quad
    \mW \mS_{xx} \mW^\top = \mV \mS_{yy} \mV^\top = \mI_k \:,
\end{align}
that is, to maximizing $\tr{(\mW \mS_{xy} \mV^\top)}$.
Let $\mW' = \mW \mS_{xx}^{1/2}$ and $\mV' = \mV \mS_{yy}^{1/2}$, and define
$\mC \coloneqq \mS_{xx}^{-1/2} \mS_{xy} \mS_{yy}^{-1/2}$. The constraints
become $\mW' \mW'^\top = \mV' \mV'^\top = \mI_k$, so $\mW'$ and $\mV'$ have
orthonormal rows, and the problem becomes
\begin{align}
    \max_{\mW',\mV'} \quad \tr{(\mW' \mC \mV'^\top)}
    = \langle \mC, \mW'^\top \mV' \rangle
    \quad \text{s.t.} \quad
    \mW' \mW'^\top = \mV' \mV'^\top = \mI_k \:.
\end{align}
The matrix $\mN \coloneqq \mW'^\top \mV'$ is a product of two partial
isometries, hence $\mathrm{rank}(\mN) \leq k$ and all singular values of
$\mN$ are at most $1$. By von Neumann's trace inequality \citep{mirsky_trace_1975},
\begin{align}
    \langle \mC, \mN \rangle
    \;\leq\; \sum_{i \leq k} \phi_i \, \sigma_i(\mN)
    \;\leq\; \sum_{i \leq k} \phi_i \:.
\end{align}
Equality is attained by $\mW' = \mU \mP_k^\top$ and $\mV' = \mU \mQ_k^\top$
with $\mU$ orthogonal, which satisfy both constraints and give
$\mN = \mP_k \mQ_k^\top$. Transforming back gives
\begin{align}
    \mW^\star = \mU \mP_k^\top \mS_{xx}^{-1/2} \:, \qquad
    \mV^\star = \mU \mQ_k^\top \mS_{yy}^{-1/2} \:.
\end{align}

\subsection{Proof of \Cref{thm:closed_form_cp}}\label{app:proof_cp}
Since $\mE \in \R^{k \times d_x}$ and $\mD \in \R^{d_y \times k}$, the composed
map $\mB = \mD\mE$ satisfies $\mathrm{rank}(\mB) \leq k$.
Conversely, any rank-$k$ matrix $\mB$ admits such a factorization, so
minimizing over $(\mD,\mE)$ is equivalent to minimizing over rank-$k$ matrices
$\mB$.
Writing the CP objective in terms of $\mB$ and expanding:
\begin{align}
    \frac{1}{n}\sum_{i} \|\vy_i - \mB\vx_i\|^2
    &= \tr(\mS_{yy}) - 2\,\tr(\mS_{yx}\mB^\top) + \tr(\mB\mS_{xx}\mB^\top)\:.
\end{align}
Substituting $\mB' \coloneqq \mB\mS_{xx}^{\tfrac{1}{2}}$ and
$\mA \coloneqq \mS_{yx}\mS_{xx}^{-\tfrac{1}{2}}$ (so that
$\mS_{yx}\mB^\top = \mA(\mB')^\top$ and
$\mB\mS_{xx}\mB^\top = \mB'(\mB')^\top$):
\begin{align}
    &= \tr(\mS_{yy}) - 2\,\tr(\mA(\mB')^\top) + \tr(\mB'(\mB')^\top) \\
    &= \tr(\mS_{yy}) - \tr(\mA\mA^\top)
       + \|\mB' - \mA\|_F^2 \:.
\end{align}
Since $\tr(\mS_{yy}) - \tr(\mA\mA^\top)$ does not depend on $\mB$, and since
$\mS_{xx}^{\tfrac{1}{2}}$ is invertible, minimizing over rank-$\leq k$ matrices
$\mB$ is equivalent to minimizing $\|\mB' - \mA\|_F^2$ over rank-$\leq k$
matrices $\mB'$.
By the Eckart--Young--Mirsky theorem~\citep{eckart_approximation_1936, Mirsky1960Q}, the best rank-$k$ approximation of $\mA$
in Frobenius norm is $(\mB')^\star = \mU_k\bm{\Sigma}_k\mV_k^\top$.
Transforming back via $\mB^\star = (\mB')^\star \mS_{xx}^{-\tfrac{1}{2}}$ gives
the stated solution.
\paragraph{Non-uniqueness of the factorization.}
The product $\mB^\star$ is uniquely determined (assuming distinct singular values),
but the factorization $\mB^\star = \mD^\star\mE^\star$ is not: for any invertible
$\mM \in \R^{k\times k}$, the pair $(\mD^\star\mM,\, \mM^{-1}\mE^\star)$ yields
the same composed map. Hence $\mE^\star$ is determined only up to left-multiplication
by an invertible matrix.

\subsection{Full parameterization}

For simplicity, assume $d_x = d_y = d$ and that there exist orthogonal
matrices $\mathbf{Q}_x, \mathbf{Q}_y$ such that
\begin{align}
\mSxx &= \mathbf{Q}_x \Lambda_x \mathbf{Q}_x^\top,
& \Lambda_x &= \diag\!\bigl(\mK^2 + \mGamma_x^{(s)},\; \mGamma_x^{(n)}\bigr), \\
\mSyy &= \mathbf{Q}_y \Lambda_y \mathbf{Q}_y^\top,
& \Lambda_y &= \diag\!\bigl(\mK^2 + \mGamma_y^{(s)},\; \mGamma_y^{(n)}\bigr), \\
\mSxy &= \mathbf{Q}_x \Lambda_{xy} \mathbf{Q}_y^\top,
& \Lambda_{xy} &= \diag\!\bigl(\mK^2,\; \mGamma_{xy}\bigr),
\end{align}
where $\mK = \diag(\kappa_1, \ldots, \kappa_k)$ with $\kappa_1 \geq \cdots \geq \kappa_k > 0$,
$\mGamma_x^{(s)} = \diag(\gamma_1^x, \ldots, \gamma_k^x)$,
$\mGamma_y^{(s)} = \diag(\gamma_1^y, \ldots, \gamma_k^y)$,
$\mGamma_x^{(n)} = \diag(\tilde\gamma_1^x, \ldots, \tilde\gamma_{d-k}^x)$,
$\mGamma_y^{(n)} = \diag(\tilde\gamma_1^y, \ldots, \tilde\gamma_{d-k}^y)$,
and $\mGamma_{xy} = \diag(\eta_1, \ldots, \eta_{d-k})$ with
$0 \leq \eta_j \leq \sqrt{\tilde\gamma_j^x \tilde\gamma_j^y}$.

\subsection{Singular-value decompositions}
\label{app:singular_values}

\begin{lemma}[Singular values of $\mC$ and $\mA$]
\label{lem:singular_values}
Under the spiked model, $\mC$ and $\mA$ are block diagonal in the bases
defined by $\mathbf{Q}_x, \mathbf{Q}_y$. Their singular values are the union
of the signal values $\rho_i, \tau_i$ and nuisance values $\nu_j, \xi_j$
given in \eqref{eq:singular_values}.
\end{lemma}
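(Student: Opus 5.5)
The plan is to conjugate the orthogonal bases $\mathbf{Q}_x, \mathbf{Q}_y$ through the matrix square roots. The inverse square roots of $\mSxx$ and $\mSyy$ are then explicit, and every coupling matrix becomes a product of diagonal matrices sandwiched between orthogonal ones.

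First I will use the spectral decompositions from the full parameterization. Since $\mSxx = \mathbf{Q}_x \Lambda_x \mathbf{Q}_x^\top$ with $\mathbf{Q}_x$ orthogonal and $\Lambda_x$ diagonal positive, the unique positive definite inverse square root is $\mSxx^{-1/2} = \mathbf{Q}_x \Lambda_x^{-1/2} \mathbf{Q}_x^\top$, and likewise $\mSyy^{-1/2} = \mathbf{Q}_y \Lambda_y^{-1/2} \mathbf{Q}_y^\top$. Positivity of $\Lambda_x, \Lambda_y$ requires $\kappa_i^2+\gamma_i^x>0$ and $\tilde\gamma_j^x>0$, which is implicit in the positive-definiteness assumption of \Cref{thm:closed_form_ca}.

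Substituting into the definition \eqref{eq:coupling_matrices} and using $\mathbf{Q}^\top\mathbf{Q}=\mI$ gives
\[
\mC = \mathbf{Q}_x\,\Lambda_x^{-1/2}\Lambda_{xy}\Lambda_y^{-1/2}\,\mathbf{Q}_y^\top .
\]
Similarly, since $\mSyx = \mSxy^\top = \mathbf{Q}_y \Lambda_{xy}\mathbf{Q}_x^\top$ (recall $\Lambda_{xy}$ is diagonal, hence symmetric), we get
\[
\mA = \mathbf{Q}_y\,\Lambda_{xy}\Lambda_x^{-1/2}\,\mathbf{Q}_x^\top .
\]
Both middle factors are products of diagonal matrices and are therefore diagonal. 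This proves the block-diagonal claim in the bases $\mathbf{Q}_x,\mathbf{Q}_y$.

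Next I will read off the diagonal entries of the middle factors.

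\emph{For $\mC$:} on a signal coordinate $i$ the entry is $\kappa_i^2/\sqrt{(\kappa_i^2+\gamma_i^x)(\kappa_i^2+\gamma_i^y)}=\rho_i$. On a nuisance coordinate $j$ it is $\eta_j/\sqrt{\tilde\gamma_j^x\tilde\gamma_j^y}=\nu_j$.

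\emph{For $\mA$:} the entries are $\kappa_i^2/\sqrt{\kappa_i^2+\gamma_i^x}=\tau_i$ and $\eta_j/\sqrt{\tilde\gamma_j^x}=\xi_j$.

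All these entries are nonnegative because $\kappa_i>0$ and $\eta_j\ge 0$. Hence $\mathbf{Q}_x\,\mathrm{diag}(\cdot)\,\mathbf{Q}_y^\top$ is already a valid SVD up to a reordering of columns, namely a simultaneous permutation of the columns of $\mathbf{Q}_x$, $\mathbf{Q}_y$ and the diagonal entries. The singular values are therefore exactly the multiset union $\{\rho_i\}\cup\{\nu_j\}$ for $\mC$, and $\{\tau_i\}\cup\{\xi_j\}$ for $\mA$, with the signal ones attached to the first $k$ basis vectors.

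There is essentially no hard step here. The only points needing care are (i) justifying that the matrix square root commutes with the orthogonal conjugation, which follows from uniqueness of the positive definite square root, and (ii) noting that nonnegativity of the diagonal entries is what makes this a genuine SVD, so that no sign flips are needed. If some $\eta_j=0$, the corresponding singular value is zero, which is consistent with the claim.
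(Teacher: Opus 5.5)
Your proof is correct and follows the same route as the paper. You conjugate through $\mathbf{Q}_x, \mathbf{Q}_y$ so that $\mC$ and $\mA$ reduce to the diagonal middle factors $\Lambda_x^{-1/2}\Lambda_{xy}\Lambda_y^{-1/2}$ and $\Lambda_{xy}\Lambda_x^{-1/2}$, read off their entries, and make explicit what the paper compresses into ``direct computation'': the form of the positive definite inverse square root, and the nonnegativity of the entries that makes this a genuine SVD.
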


\emph{Proof.} In the bases $\mathbf{Q}_x, \mathbf{Q}_y$, both
$\mC = \mSxx^{-1/2} \mSxy \mSyy^{-1/2}$ and $\mA = \mSyx \mSxx^{-1/2}$
are block diagonal with signal and nuisance blocks. Direct computation on
each block yields the stated expressions.

\begin{corollary}[Recovery conditions]
If $\min_i \rho_i > \max_j \nu_j$, the top-$k$ singular vectors of $\mC$
align with the shared signal block, so the CA solution from \Cref{thm:closed_form_ca} recovers the shared signal subspace (up to
rotation) and discards modality-specific noise. The analogous statement
holds for $\mA$ and CP with $\tau_i, \xi_j$.
\end{corollary}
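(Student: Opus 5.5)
The plan is to read the corollary off \Cref{lem:singular_values} together with the closed forms in \Cref{thm:closed_form_ca} and \Cref{thm:closed_form_cp}, using only one spectral fact. If a matrix is diagonal with nonnegative entries in a pair of orthonormal bases, its singular values are those entries. When the $k$ designated entries strictly exceed all the others, the leading-$k$ left and right singular subspaces are exactly the spans of the corresponding basis vectors. This holds even when the leading singular vectors are not unique because of ties inside the signal block.

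\textbf{CA.} First, write $\mSxx^{-1/2} = \mathbf{Q}_x \Lambda_x^{-1/2}\mathbf{Q}_x^\top$ and $\mSyy^{-1/2} = \mathbf{Q}_y \Lambda_y^{-1/2}\mathbf{Q}_y^\top$. This gives $\mC = \mathbf{Q}_x \bigl(\Lambda_x^{-1/2}\Lambda_{xy}\Lambda_y^{-1/2}\bigr)\mathbf{Q}_y^\top$, whose middle factor is diagonal with entries $\rho_1,\dots,\rho_k,\nu_1,\dots,\nu_{d-k}$. All of these are nonnegative, since $\eta_j \ge 0$.

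Second, assume $\min_i\rho_i > \max_j\nu_j$. Then the leading-$k$ singular values are the $\rho_i$, separated by a strict gap from the rest. Hence $\mathrm{span}(\mP_k) = \mathrm{span}(\mathbf{Q}_{x,1:k})$ and $\mathrm{span}(\mQ_k) = \mathrm{span}(\mathbf{Q}_{y,1:k})$, that is, $\mP_k = \mathbf{Q}_{x,1:k}\mR$ for some orthogonal $\mR$, and similarly for $\mQ_k$.

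Third, substitute into $\mW^\star = \mU\mP_k^\top\mSxx^{-1/2}$. Because $\Lambda_x^{-1/2}$ is block diagonal, $\mP_k^\top\mSxx^{-1/2} = \mR^\top(\mK^2+\mGamma_x^{(s)})^{-1/2}\mathbf{Q}_{x,1:k}^\top$. So $\mW^\star\vx$ depends only on the signal coordinates $\mathbf{Q}_{x,1:k}^\top\vx$, and it is an invertible linear function of them. The row space of $\mW^\star$ is therefore the shared signal subspace (up to the rotation $\mU\mR^\top$), and it annihilates the nuisance coordinates. The same argument applies to $\mV^\star$.

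\textbf{CP.} The argument is identical with $\mA = \mSyx\mSxx^{-1/2} = \mathbf{Q}_y\bigl(\Lambda_{xy}^\top\Lambda_x^{-1/2}\bigr)\mathbf{Q}_x^\top$, whose diagonal entries are $\tau_i$ and $\xi_j$. If $\min_i\tau_i > \max_j\xi_j$, then $\mV_{\!A,k}$ spans $\mathbf{Q}_{x,1:k}$ and $\mU_{\!A,k}$ spans $\mathbf{Q}_{y,1:k}$. Consequently $\mB^\star = \mU_{\!A,k}\mSigma_k\mV_{\!A,k}^\top\mSxx^{-1/2}$ has row space equal to the $x$-signal subspace and column space equal to the $y$-signal subspace.

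The conclusion should be stated for the encoder. $\mE^\star$ is determined only up to left multiplication by an invertible $\mM$ (\Cref{thm:closed_form_cp}), and that multiplication does not change its row space, so any minimizing encoder also recovers the shared subspace.

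\textbf{Main obstacle.} The only delicate point is making ``align with the signal block'' precise without assuming distinct singular values. Inside the signal block the singular vectors are determined only up to an orthogonal $\mR$, so I would phrase the claim at the level of subspaces, not individual vectors. I would then use the strict inequality to rule out any mixing with nuisance directions: any top-$k$ singular subspace is an invariant subspace of $\mC\mC^\top$ (resp.\ $\mA^\top\mA$) for the eigenvalues $\rho_i^2$ (resp.\ $\tau_i^2$), and these are disjoint from the nuisance eigenvalues. If the inequality were not strict, ties between signal and nuisance values could admit mixed minimizers, and the statement would fail.
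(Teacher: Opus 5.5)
Your proposal is correct and follows the paper's route. The paper block-diagonalizes $\mC$ and $\mA$ in the bases $\mathbf{Q}_x,\mathbf{Q}_y$ (\Cref{lem:singular_values}), uses the strict gap to place the top-$k$ singular vectors in the signal block (the $r=k$ case of the argument in the proof of \Cref{prop:partial_recovery}), and substitutes into the closed forms; you do the same, only more carefully than the paper's one-line ``direct computation'': you handle ties by working with subspaces, push $\mSxx^{-1/2}$ through explicitly, and pass from $\mB^\star$ to $\mE^\star$ (a step that also follows directly from $\mathrm{row}(\mB^\star)\subseteq\mathrm{row}(\mE^\star)$ with both of dimension $k$, without the non-uniqueness remark).
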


\subsection{Proof of \Cref{prop:separation}}
\label{app:separation_proof}
The singular-value expressions follow from \Cref{lem:singular_values}.
For the ratio identity, we consider the homogeneous case
($\kappa_i \equiv \kappa$, $\gamma_i^x \equiv \gamma^x$,
$\gamma_i^y \equiv \gamma^y$, $\tilde\gamma_j^x \equiv \tilde\gamma^x$,
$\tilde\gamma_j^y \equiv \tilde\gamma^y$, $\eta_j \equiv \eta$),
in which all $\rho_i$ collapse to a single value $\rho$, and likewise
$\tau_i \equiv \tau$, $\nu_j \equiv \nu$, $\xi_j \equiv \xi$.
Substituting \eqref{eq:singular_values} into $\DCA/\DCP = (\rho/\nu)/(\tau/\xi)$ yields
\begin{equation}
\frac{\DCA}{\DCP}
= \sqrt{\frac{\tilde\gamma^y}{\kappa^2 + \gamma^y}}.
\end{equation}
In the heterogeneous case, the same substitution yields the upper bound
\begin{equation}
\frac{\DCA}{\DCP}
\leq \sqrt{\frac{\max_j \tilde\gamma_j^y}{\min_i(\kappa_i^2 + \gamma_i^y)}},
\end{equation}
since the indices achieving $\min_i \rho_i$ and $\min_i \tau_i$ (and similarly
the nuisance maxima) need not coincide. Monotonicity of $\DCA/\DCP$ in each
$\tilde\gamma_j^y$ follows from $\DCP$ being invariant in $\tilde\gamma_j^y$
(since $\xi_j = \eta_j/\sqrt{\tilde\gamma_j^x}$ does not depend on target
nuisance variance) and $\DCA$ being non-increasing in $\nu_j$, with $\nu_j$
non-increasing in $\tilde\gamma_j^y$.

\subsection{Proof of \Cref{prop:partial_recovery}}
\label{app:partial_recovery_proof}

We prove the CA case; the CP case is identical with $\rho_i, \nu_j$
replaced by $\tau_i, \xi_j$ and $\mC$ replaced by $\mA$.

By \Cref{lem:singular_values}, the singular values of $\mC$ are the union
$\{\rho_i\}_{i \in \llbracket k \rrbracket} \cup \{\nu_j\}_{j \in \llbracket d-k \rrbracket}$,
with each $\rho_i$ corresponding to a singular vector in the signal block
and each $\nu_j$ to a singular vector in the nuisance block.

Let $i$ be any signal index with $\rho_i > \max_j \nu_j$. Then $\rho_i$
exceeds every nuisance singular value, so at most $k - 1$ values of $\mC$
(the other signal values) can exceed $\rho_i$, placing $\rho_i$ among the
top $k$ singular values. Since this holds for each of the $r_{\mathrm{CA}}$
signal indices with $\rho_i > \max_j \nu_j$, the top-$k$ singular vectors
contain at least $r_{\mathrm{CA}}$ vectors from the signal block.
\clearpage
\section{Implementation Details}\label{sec:implementation_details}

\paragraph{Linear experiments.}
All linear experiments use closed-form solvers on population covariances drawn from the spiked model of \Cref{sec:solutions_and_bottlenecks}. We set $d=20$, $k=3$ shared dimensions with signal strengths $\kappa = (3.0, 2.0, 1.5)$, and consider a regime with clean signal in the target modality ($\gamma_y^{(s)} = 0.05$) but large target nuisance variance ($\gamma_y^{(n)} = 50.0$), with source-side parameters $\gamma_x^{(s)} = 0.5$, $\gamma_x^{(n)} = 1.0$. We sweep the normalized noise correlation $\nu = \eta / \sqrt{\gamma_x^{(n)} \gamma_y^{(n)}} \in [0, 0.95]$ and report subspace distance (averaged over 20 random rotations) and theoretical separation ratios. Subspace distances are computed as $\|\mP_{\hat{U}} - \mP_{U}\|_F / \sqrt{2k}$ where $\mP$ denotes the orthogonal projector, and averaged over 20 random rotations of the signal/noise bases. No optimization is involved; the solvers compute exact CA (CCA) and CP (truncated reduced-rank regression) solutions from the covariance matrices on a single CPU.

\paragraph{Stereo-dSprites.}
Two virtual cameras observe a shared 2D object (Square, Ellipse, or Heart) on a $64 \times 64$ grayscale canvas. World position $P_{\text{world}} \in [-0.5, 0.5]^2$ serves as the aligned nuisance; camera jitter $\sigma_{\text{jitter}} \in \{0.0, 0.05, 0.2, 0.5\}$ controls de-alignment via per-view translation and rotation. View~X receives Gaussian pixel noise $\sigma_{\text{strong}} = 0.1$; View~Y receives $\sigma_{\text{weak}} \in \{0.2, 0.5, 0.9\}$. Each modality is encoded by a separate 4-layer CNN ($1 \times 64 \times 64 \to 128$-dim) with ReLU activations. CA uses VICReg ($25 \times$ invariance $+ 25 \times$ variance $+ 1 \times$ covariance) on 32-dimensional projections. CP uses MSE reconstruction via a transposed-convolutional decoder. All models are trained with Adam (lr=$10^{-3}$), batch size 64, for up to 100 epochs with early stopping (patience 5). Downstream evaluation uses a linear probe on frozen encoder features for 3-class shape classification, swept over 9 probe sizes (100 to ${\sim}6{,}000$ samples) and averaged over 5 seeds. We sweep $n_{\text{samples}} \in \{10\text{k}, 50\text{k}, 100\text{k}\}$ (see \Cref{fig:dsprites_samples} for $10\text{k}, 50\text{k}$ results). The entire sweep took approximately 24 hours on one L40S GPU. 

\paragraph{Stereo-3DShapes.}
Built from Google's 3DShapes dataset (480K RGB images). Canonical images, one per shape at fixed hue, scale, and orientation, are rendered into stereo pairs via affine warping with the same jitter and noise protocol as dSprites. The signal is 4-class shape (Cube, Cylinder, Sphere, Capsule); additional nuisance factors include floor hue (10 values), wall hue (10), object hue (10), scale (8), and orientation (15), all fixed per canonical image and shared across views. Encoders are 4-layer CNNs ($3 \times 64 \times 64 \to 128$-dim; channels $3\to 32\to 32\to 64\to 64$, stride 2) with FC layers $1024 \to 256 \to 128$. Training and evaluation follow the dSprites protocol with $n_{\text{samples}} = 100\text{k}$, 10 probe sizes (100 to 10{,}000), and 3--4 seeds. The entire sweep took approximately 24 hours on one L40S GPU.

\paragraph{MS-COCO image-caption.}
\label{par:coco_implementation}
We pair each COCO~2017 image with its associated caption, using the
dominant-object category (largest bounding box, 80 classes) as the
downstream label. Images are $3 \times 224 \times 224$ RGB, encoded by a
ResNet-18 trained from scratch; captions are word-tokenized to length
$64$ and encoded by a 2-layer Transformer (4 heads, $d_{\text{embed}} = 256$)
followed by mean pooling and a linear projection to $128$ dimensions.
Neither encoder uses pretrained weights.  Nuisance is injected into the image modality: each image is passed through $k$ independent distortion groups (color cast, exposure,
contrast, texture, saturation, spatial) drawn uniformly from six groups, with $k$ controlled by a noise level
$\ell \in \{0.0, 0.2, 0.5\}$ (expected $k \approx 6\ell$ groups active).
Within each active group, a random transform is applied at continuous
intensity $t \sim \mathrm{Uniform}(0.3, 1.0)$, so every sample receives a unique pixel-level distortion. Training uses
AdamW (lr = $10^{-3}$, weight-decay = $10^{-4}$) with 5-epoch linear warmup into cosine annealing, batch size $1024$ per GPU across 4--6 GPUs
via DDP, for $50$ epochs. CA uses VICReg on projected embeddings from
both encoders (invariance $25\times$, variance $25\times$, covariance
$1\times$); CP$_{I \to T}$ is the image encoder feeding a caption decoder
under cross-entropy loss; CP$_{T \to I}$ is the text encoder feeding a
pixel decoder under MSE. Evaluation: each method is evaluated on its source /bottleneck encoder. 
For CA, the image encoder is probed (a symmetric choice — both encoders 
are equally optimized under VICReg). For $\mathrm{CP}_{I \to T}$, the 
image encoder (source). For $\mathrm{CP}_{T \to I}$, the text encoder 
(source). Each frozen representation is fed to a linear probe trained for 
30 epochs on the 80-class label; we report
top-1 accuracy. The experiment took approximately 24 hours on 4 RTX6000 GPUs.

\paragraph{Astrophysical cross-modal.}
\label{par:astro_implementation}
We pair LAMOST optical spectra (DR8, resolution $R \sim 1800$, range $3690$--$9100$\,\AA) with light curves from two photometric surveys —
Kepler (DR25, 30-min cadence, $\sim\!4$\,year baseline; $94{,}876$ cross-matched observations) and TESS (QLP lightcurves;
$821{,}878$ cross-matched stars). Each modality uses its own pretrained unimodal encoder, frozen during cross-modal training. LAMOST spectra are
encoded by a '1d ViT' that produces a $2048$-dim CLS token; light curves are encoded by a multichannel network with parallel flux and frequency (ACF and
FFT/Lomb–Scargle) branches combined through a mixer, producing a
$1024$-dim mean-pooled embedding. Both encoders were pretrained
independently on their respective modalities before any cross-modal
training. On these frozen features we train lightweight heads for each
method: CA uses two projection MLPs ($2048 \to 512$ and $1024 \to 512$)
with a VICReg objective (invariance $25\times$, variance $25\times$,
covariance $1\times$); CP uses a cross-predictor MLP
with a $512$-dim bottleneck and MSE loss. Optimization: AdamW
(lr\,=\,$10^{-3}$, weight decay\,=\,$10^{-4}$) with cosine annealing,
batch size $256$, early stopping on validation loss. Evaluation uses a
linear probe on the cross-modal representation (concatenated projections
for CA, bottleneck activations for CP) against held-out stellar labels —
$\log g$, age (regression), and binarity (classification). The cross-modal experiment took approximately 4 hours on one RTX6000 GPU.

\paragraph{BMMC CITE-seq.}
\label{par:bmmc_implementation}
We pair RNA expression with surface-protein abundance measured in the same cells, using
the BMMC CITE-seq dataset ($90{,}243$ bone-marrow mononuclear cells;
\citealp{Luecken2021ASF}), with $44$ annotated cell types as the downstream label.
RNA is encoded by scGPT \citep{cui_scgpt_2024} in a one-time pass over the $2{,}000$
highly variable genes, giving cached $512$-dim embeddings that are never updated. The
protein modality enters as the $134$ CLR-normalized values shipped with the dataset,
passed through a small MLP ($134 \to 256 \to 512 \to 512$) trained jointly with the
cross-modal objective; a separate encoder is trained per mode, so the ADT-only baseline is
mode-specific and we report the CA-mode value. CA uses projection MLPs onto $256$
dimensions per view with a VICReg objective (invariance $25\times$, variance $25\times$,
covariance $1\times$); CP uses a cross-predictor MLP with a $512$-dim bottleneck and MSE
loss, in both directions. Optimization: AdamW (lr\,=\,$10^{-3}$, weight
decay\,=\,$10^{-4}$) with cosine annealing, batch size $512$, $200$ epochs, early stopping
on validation loss (patience $30$). Splits are donor-based ($6$ train / $1$ validation /
$2$ test donors), averaged over $5$ seeds with a different donor assignment per seed.
Evaluation uses a linear probe (standardization, class-balanced logistic regression) on
the cross-modal representation, concatenated projections for CA, bottleneck activations
for CP, reported as balanced accuracy. Regime estimation
(\Cref{alg:phase_estimation}) runs on a seeded subsample of $n = 5{,}000$ cells, on the
same representations the objective consumes: scGPT embeddings and raw CLR counts, each
centered and reduced to $100$ principal components, $\varepsilon = 10^{-6}$; PCA is used
only for estimation. Training all three objectives and their probes took approximately
$1.5$ hours per seed on one L40S GPU.

As a check on the single-stage protein branch, we repeated the experiment with the ADT
encoder initialized from a unimodal VICReg pretraining stage (same architecture, trained
on ADT alone with two augmented views of each CLR vector via Gaussian noise
$\sigma \sim \mathcal{U}(0, 0.1)$ and random feature dropout
$p \sim \mathcal{U}(0, 0.1)$; projection heads discarded), and estimated the regime from
that encoder's frozen $512$-dim embeddings rather than raw CLR counts. Both ratios
decrease ($\hat\Delta_{\mathrm{CA}} = 0.34$, $\hat\Delta_{\mathrm{CP}} = 0.11$ versus
$0.67$ and $0.26$ at the canonical seed), as expected from a decorrelating unimodal
objective, but the regime (Neither in $5/5$ seeds), the ordering
$\mathrm{CA} > \mathrm{CP} > \mathrm{CP}_{\mathrm{rev}}$, and the downstream accuracies
are unchanged.

\begin{algorithm}[h]
\caption{Recovery Regime Prediction}
\label{alg:phase_estimation}
\begin{algorithmic}[1]
\REQUIRE Paired embeddings $\mZ_x \in \mathbb{R}^{n \times d_x}$, $\mZ_y \in \mathbb{R}^{n \times d_y}$; labels $\mY \in \mathbb{R}^{n \times L}$
\ENSURE $\hat{\Delta}_{\mathrm{CA}}$, $\hat{\Delta}_{\mathrm{CP}}$, predicted regime
\STATE \COMMENT{CA: read separation ratio from CCA spectrum}
\STATE Compute SVD of $\hat{\mC} = (\hat{\mSigma}_{xx} + \varepsilon \mI)^{-1/2} \hat{\mSigma}_{xy} (\hat{\mSigma}_{yy} + \varepsilon \mI)^{-1/2}$; obtain $\phi^{\mathrm{CCA}}$
\STATE Classify CCA components as signal/nuisance by elbow detection on per-component $R^2$
\STATE $\hat{\Delta}_{\mathrm{CA}} \leftarrow \min(\phi^{\mathrm{CCA}}[\mathrm{signal}]) \,/\, \max(\phi^{\mathrm{CCA}}[\mathrm{nuisance}])$
\STATE \COMMENT{CP: read separation ratio from $\mA$-SVD spectrum}
\STATE Compute SVD of $\hat{\mA} = \hat{\mSigma}_{yx} (\hat{\mSigma}_{xx} + \varepsilon \mI)^{-1/2}$; obtain $\sigma^{\mA}$
\STATE Classify $\mA$-SVD components as signal/nuisance by elbow detection on per-component $R^2$
\STATE $\hat{\Delta}_{\mathrm{CP}} \leftarrow \min(\sigma^{\mA}[\mathrm{signal}]) \,/\, \max(\sigma^{\mA}[\mathrm{nuisance}])$
\RETURN $(\hat{\Delta}_{\mathrm{CA}}, \hat{\Delta}_{\mathrm{CP}})$ and regime per $\gtrless 1$
\end{algorithmic}
\end{algorithm}
\FloatBarrier

\paragraph{Recovery regime prediction.}
\label{par:phase_estimation_details}
The pipeline takes paired embeddings
$\mZ_x \in \mathbb{R}^{n \times d_x}$,
$\mZ_y \in \mathbb{R}^{n \times d_y}$ and labels
$\mY \in \mathbb{R}^{n \times L}$ for a labeled subsample.
Under the spiked model, $\hat\Delta_\mathrm{CA}$ and $\hat\Delta_\mathrm{CP}$
are singular-value ratios of $\mC$ and $\mA$ respectively: signal and
nuisance singular values appear together in each spectrum, distinguished
only by which block of the spiked decomposition they belong to. We
estimate each $\Delta$ directly from the corresponding spectrum,
classifying each component as signal or nuisance by its predictive power
for the labels. For each decomposition, 5-fold Ridge regression of the
component scores against the labels yields a per-component $R^2$ value
(summed across label columns for the classification statistic), and
piecewise-linear elbow detection on the sorted $R^2$ curve identifies
the breakpoint between the signal block and the nuisance floor. The
labeled subsample need not cover the full training set: in our
experiments, $n < 1{,}000$ samples suffice.

\clearpage
\section{Additional Figures}\label{sec:additional_figs}

\begin{figure}[h]
  \centering
  \includegraphics[width=\linewidth]{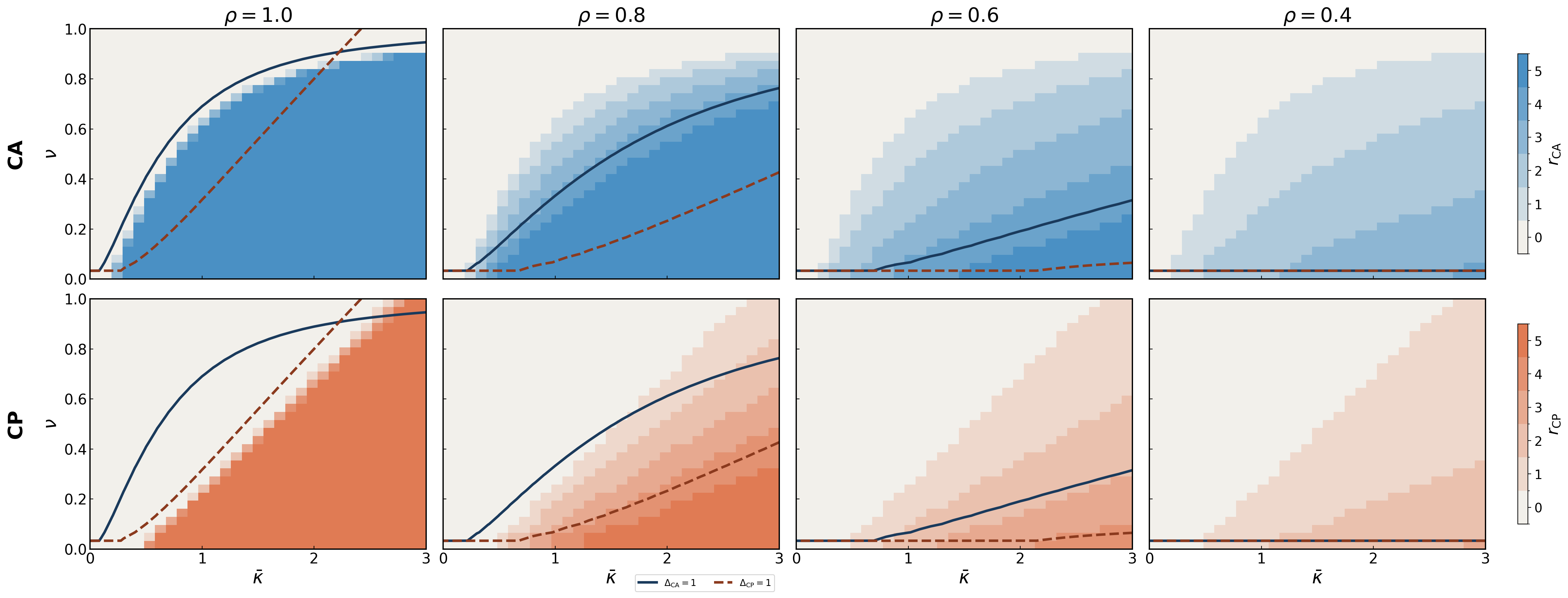}
  \caption{\textbf{Partial recovery under heterogeneous signal spectra.} 
  Empirical recovery count $r$ (number of signal directions with squared 
  projection $\geq 0.8$ onto the top-$k$ recovered subspace) across the 
  $(\bar{\kappa},\,\nu)$ plane, for signal spread 
  $\rho \in \{1.0,\,0.8,\,0.6,\,0.4\}$ (columns, homogeneous $\to$ 
  heterogeneous). \figtop: CA. \figbottom: CP. Signal strengths follow 
  a geometric decay $\kappa_i = \bar{\kappa}\,\rho^{\,i-1}$ with $k=5$, 
  $d=20$; noise parameters $\gamma_x = 1$, $\gamma_y = 0.05$, 
  $\tilde{\gamma}_y = 5$. Overlaid curves: $\DCA = 1$ (solid) and 
  $\DCP = 1$ (dashed) from the homogeneous theory of 
  \figref{fig:phase_diagram}. At $\rho = 1$ the transition is sharp 
  ($r \in \{0, k\}$) and the $\Delta = 1$ contours align with the empirical 
  boundary, reproducing \figref{fig:phase_diagram}. As $\rho$ decreases, 
  intermediate counts $r \in \{1,\dots,k-1\}$ fill a widening band in which 
  stronger signal directions are recovered first; the four-region phase 
  diagram smears into a graded continuum. Averaged over 
  10 seeds with $n = 5{,}000$.}
  \label{fig:partial_recovery}
\end{figure}

\begin{figure}[h]
    \centering
    \includegraphics[width=0.5\linewidth]{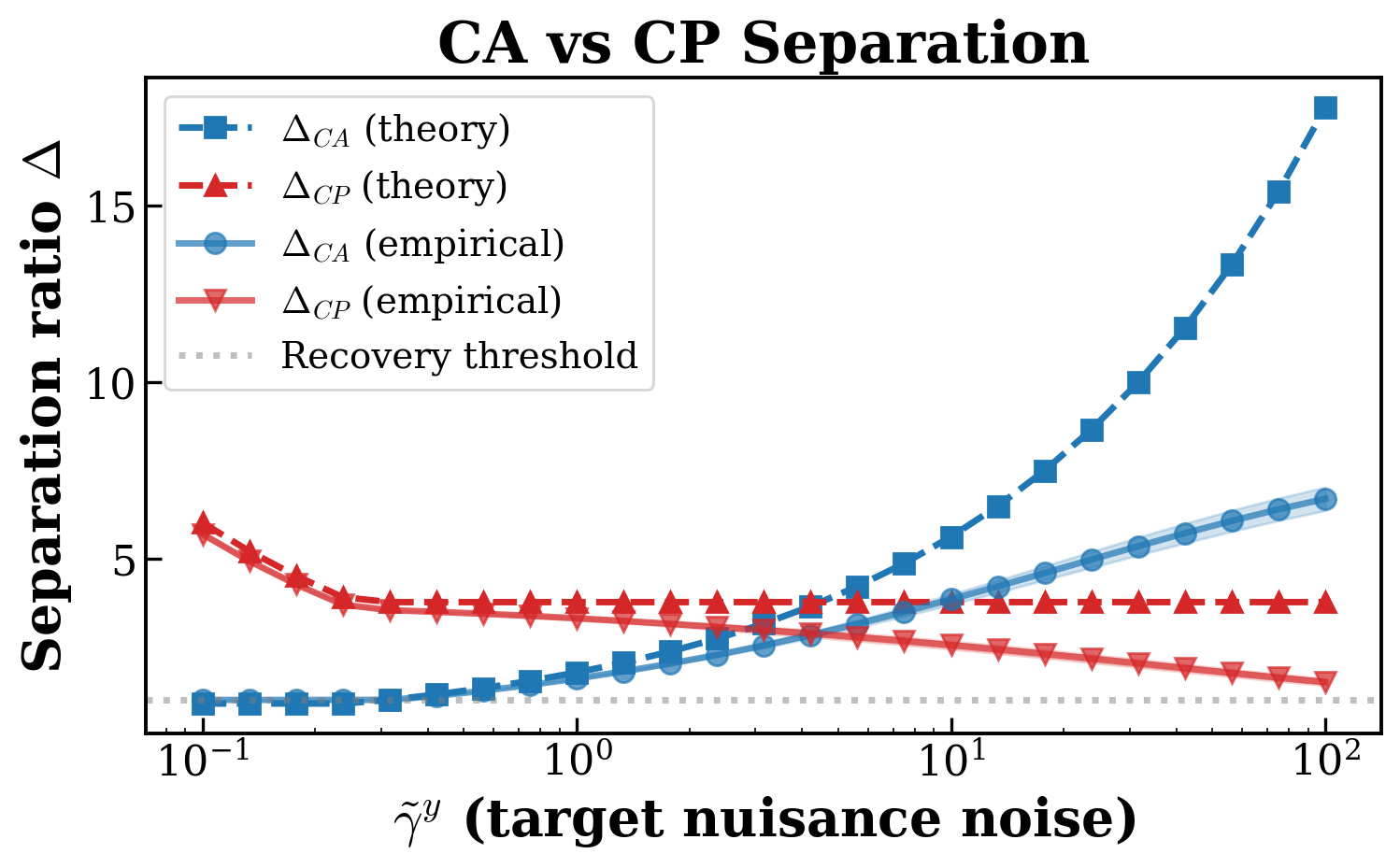}
    \caption{Separation ratios $\Delta_{\mathrm{CA}}$ and $\Delta_{\mathrm{CP}}$ as a function of
    target nuisance variance $\tilde{\gamma}^y$, validating~\Cref{prop:separation}.
    Theory curves (dashed) are computed from the closed-form expressions in
    \Cref{thm:closed_form_ca,thm:closed_form_cp}; empirical curves (solid) are estimated from finite-sample
    covariances averaged over 20 random rotations.
    As $\tilde{\gamma}^y$ grows, $\Delta_{\mathrm{CA}}$ increases unboundedly —
    CA's symmetric whitening suppresses high-variance nuisance on both sides —
    while $\Delta_{\mathrm{CP}}$ remains approximately constant, since 
    $\xi_j = \eta_j/\sqrt{\tilde\gamma^x_j}$ does not depend on target nuisance 
    variance: source-side whitening operates only on the source modality.
    Theory and empirical curves are in close agreement throughout, confirming the
    accuracy of the closed-form predictions at finite sample sizes.}
    \label{fig:ca_cp_separation}
\end{figure}

\begin{figure}[h]
    \centering
    \includegraphics[width=0.6\linewidth]{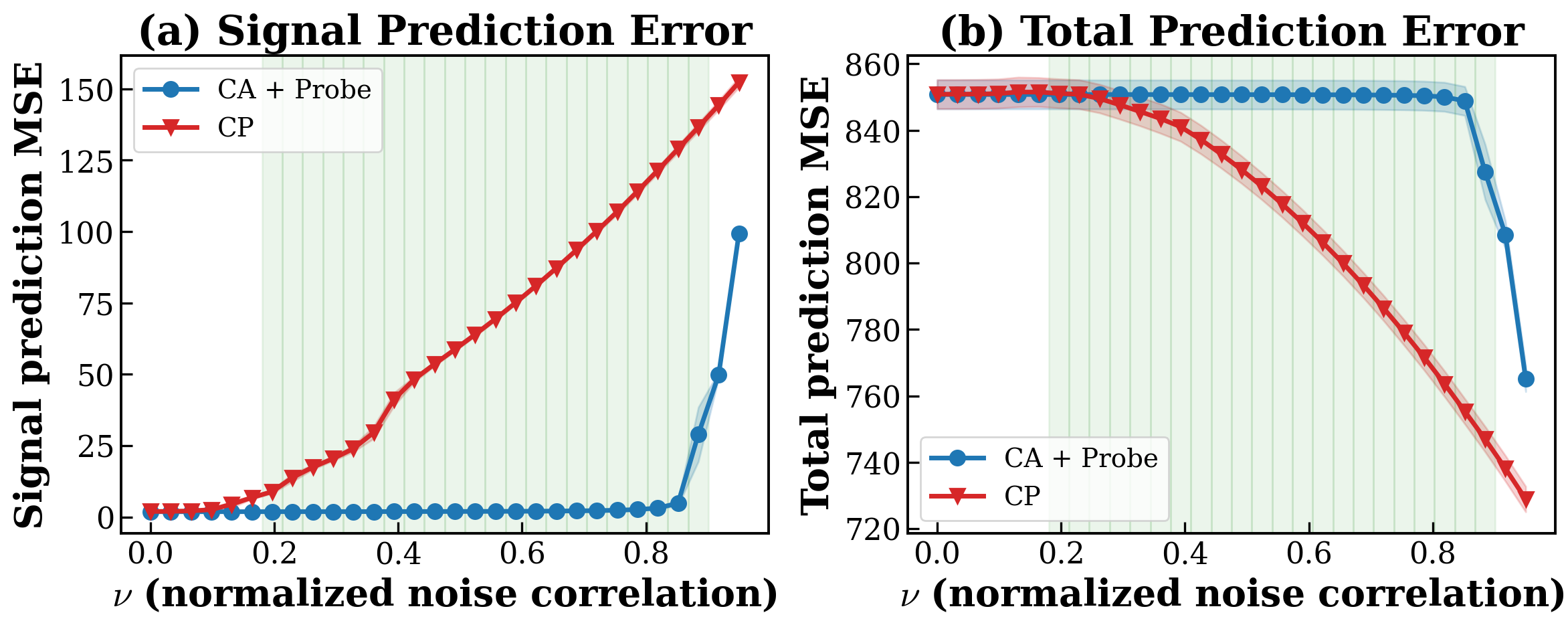}
    \caption{The variance trap: signal recovery vs.\ reconstruction quality for CA+Probe
    and direct CP, using the same parameter regime as~\Cref{fig:linear_E4}
    ($\kappa = (3.0, 2.0, 1.5)$, $\tilde{\gamma}^y = 50.0$, $\tilde{\gamma}^x = 1.0$).
    Green shading marks the regime where $\Delta_{\mathrm{CA}} > 1 > \Delta_{\mathrm{CP}}$.
    \textbf{(a) Signal prediction MSE}: in the green region, CA+Probe achieves
    near-zero signal MSE while CP's signal error climbs steeply, confirming that CP
    encodes the wrong subspace.
    \textbf{(b) Total prediction MSE}: CP achieves \emph{lower} total MSE than CA+Probe
    across the same region, because it successfully reconstructs the high-variance nuisance
    components — a task at which CA+Probe, having discarded nuisance directions, cannot
    compete.
    Together, the two panels illustrate the core danger of using reconstruction error as a
    proxy for signal recovery: CP can achieve lower loss than CA while failing to recover
    the signal, precisely because the MSE objective does not distinguish signal from nuisance.}
    \label{fig:ca_probe_vs_cp}
\end{figure}

\begin{figure}[h]
    \centering
    \begin{minipage}[t]{0.48\linewidth}
        \centering
        \includegraphics[width=\linewidth]{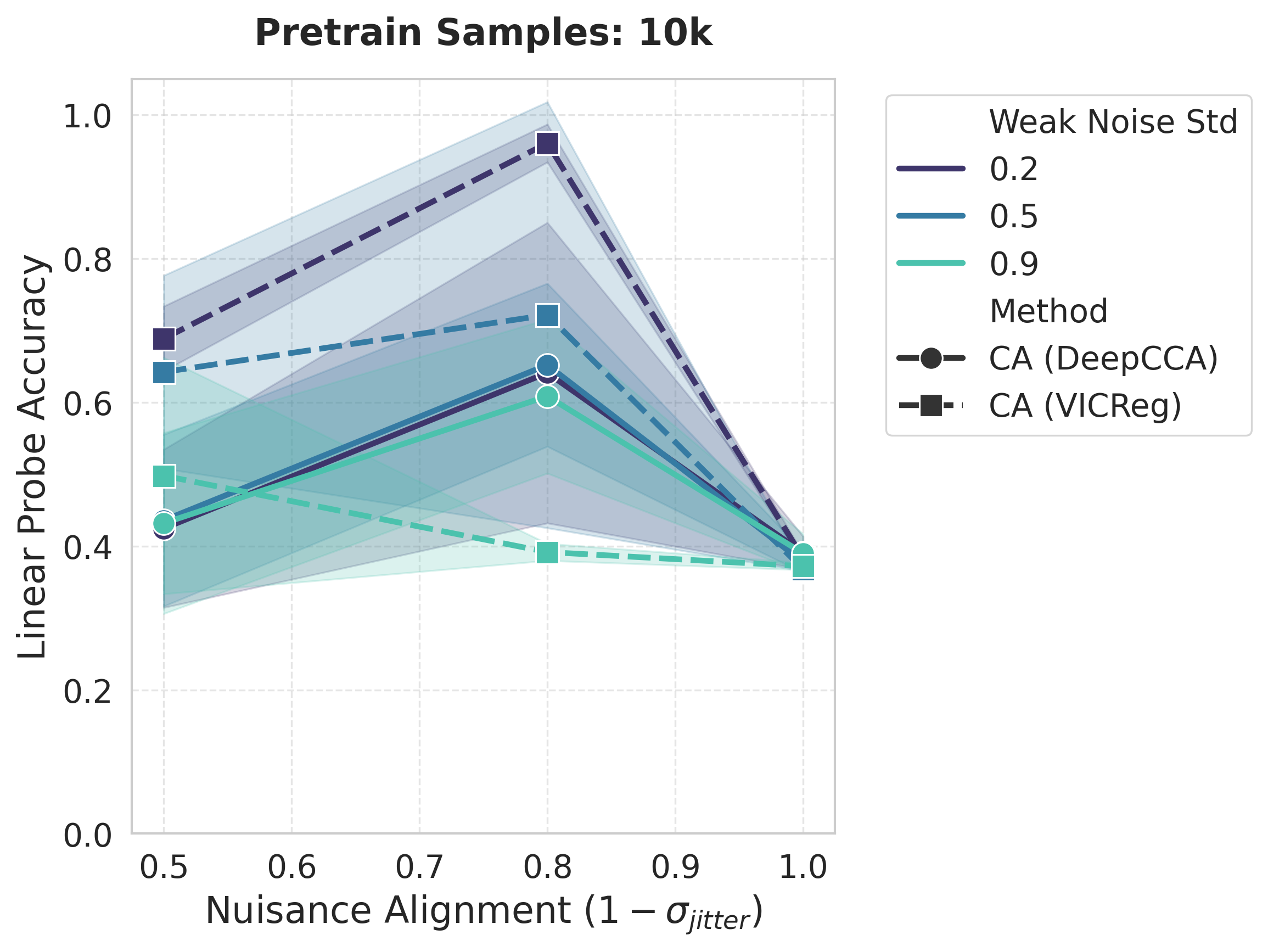}
        \par\smallskip
        \small (a) dSprites
    \end{minipage}
    \hfill
    \begin{minipage}[t]{0.48\linewidth}
        \centering
        \includegraphics[width=\linewidth]{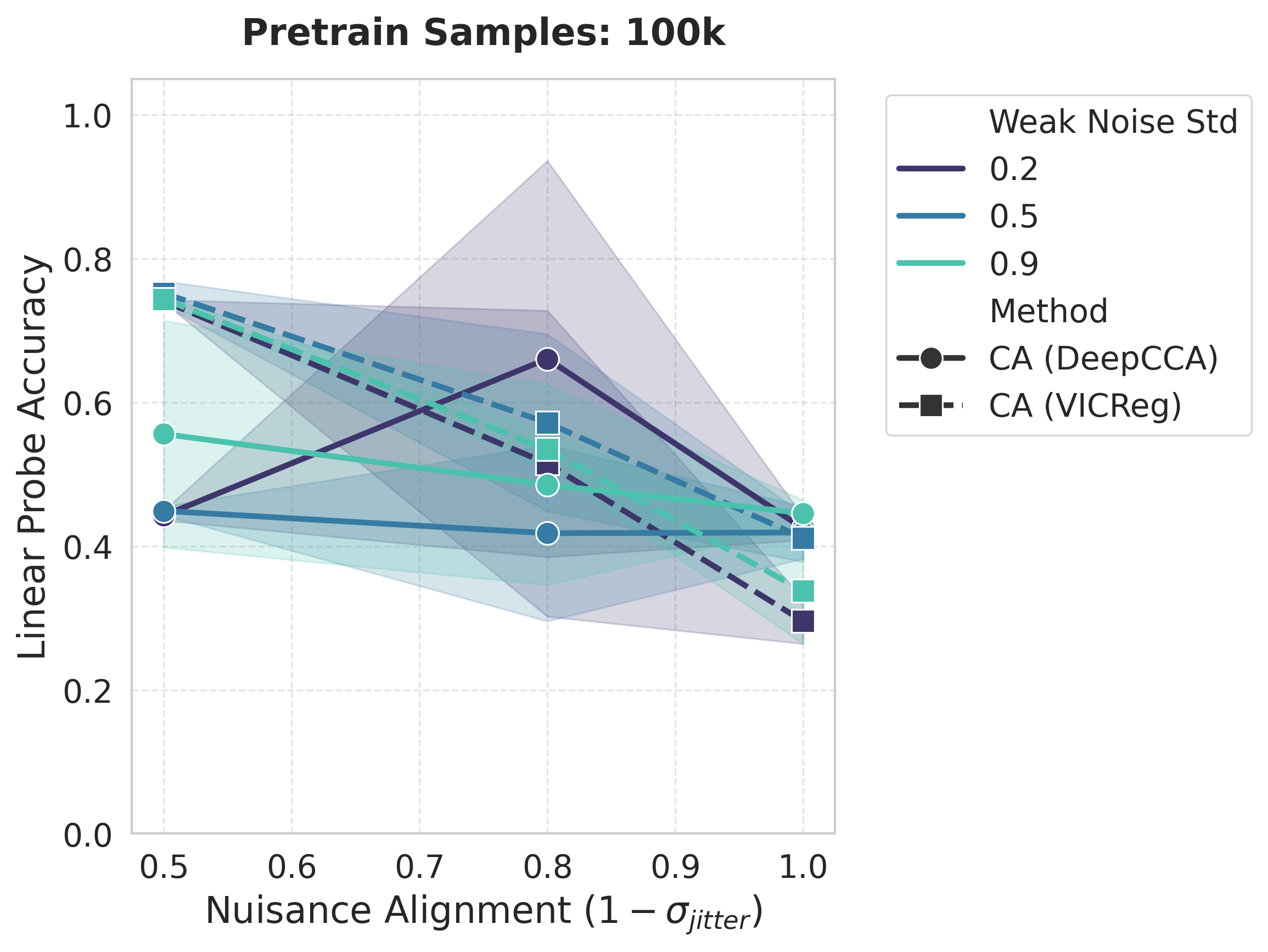}
        \par\smallskip
        \small (b) Shape3D
    \end{minipage}
    \caption{Comparison between VICReg and DeepCCA for dSprites and Shape3D experiments}
    \label{fig:vicreg_vs_deepcca}
\end{figure}

\begin{figure}[h]
    \centering
    \begin{minipage}[t]{0.48\linewidth}
        \centering
        \includegraphics[width=\linewidth]{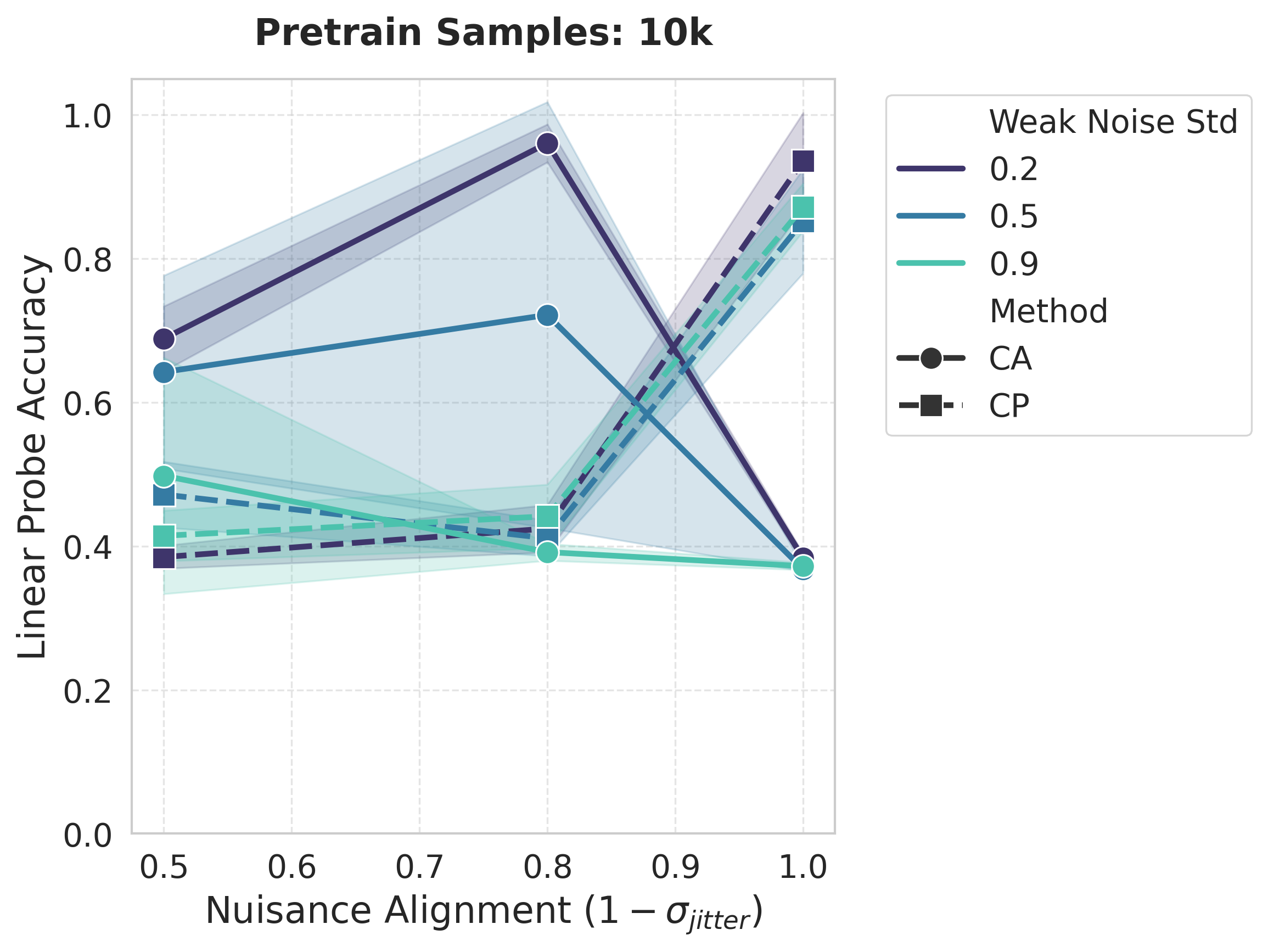}
    \end{minipage}
    \hfill
    \begin{minipage}[t]{0.48\linewidth}
        \centering
        \includegraphics[width=\linewidth]{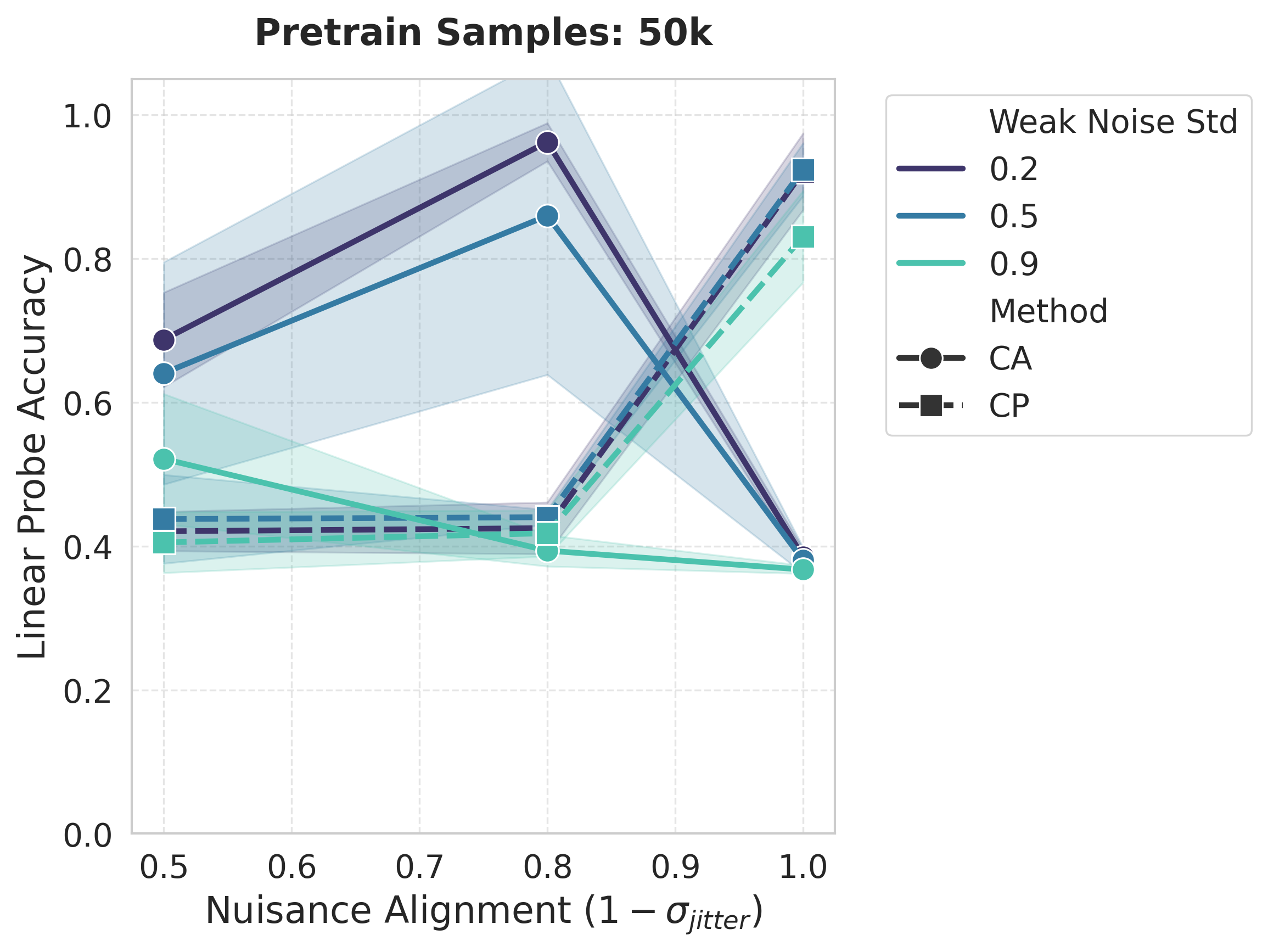}
    \end{minipage}
    \caption{Stereo-dSprites accuracy vs.\ nuisance alignment at 10k \figleft~and 50k \figright~pretraining samples. The CA--CP crossover from Figure~\ref{fig:stereo_performance} persists across dataset scales.}
    \label{fig:dsprites_samples}
\end{figure}

\begin{figure}[t]
  \centering
  \includegraphics[width=\linewidth]{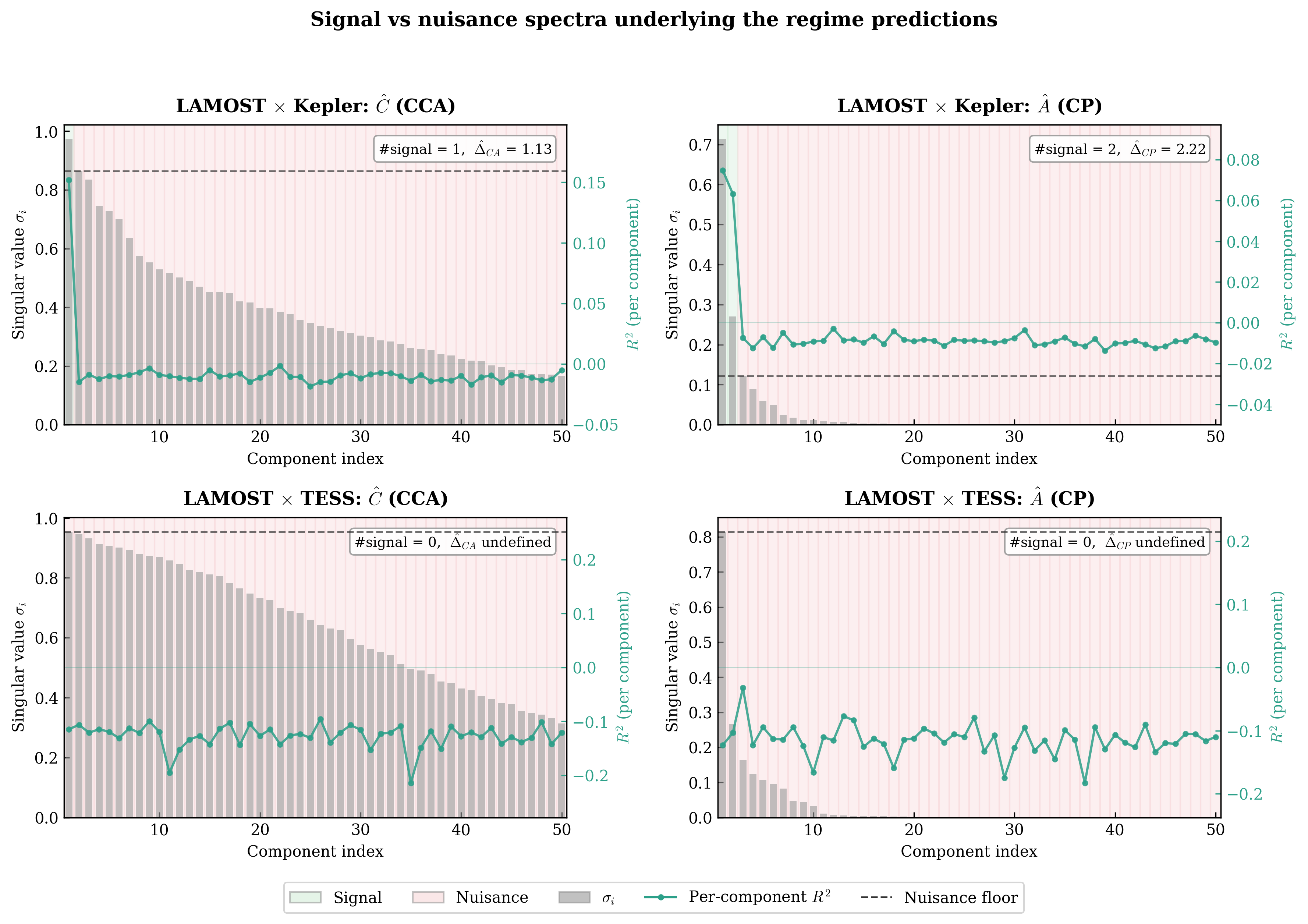}
 \caption{\textbf{Signal--nuisance decomposition underlying the regime
  predictions of \Cref{tab:astro_results}.} Each panel shows the sorted
  singular values (gray bars, left axis) and per-component $R^2$ against
  $\log g$ (teal line, right axis) used by \Cref{alg:phase_estimation} to
  classify components as signal (green shading) or nuisance (red shading).
  Dashed horizontal line: nuisance floor $\max_j \hat\nu_j$ (CCA panels) or
  $\max_j \hat\xi_j$ ($\Ab$-SVD panels); $\hat\Delta > 1$ iff every classified signal singular value exceeds the 
  nuisance floor. \figtop: LAMOST $\times$ Kepler --- both
  decompositions have signal components above the nuisance floor
  ($\hat\DCA = 1.13$, $\hat\DCP = 2.22$; Both regime). \figbottom: LAMOST
  $\times$ TESS --- no CCA component predicts $\log g$ above noise
  ($R^2 \approx 0$ across all components, zero signal detected); the $\mathbf{A}$-SVD has one candidate signal component but below the nuisance floor. No signal direction clears the floor in either spectrum ($\hat r_{\mathrm{CA}} = \hat r_{\mathrm{CP}} = 0$), the complete-failure form of the Neither regime. The contrast between the two
  rows --- same LAMOST encoder, same protocol, different photometric
  instrument --- shows that instrument quality determines the
  signal--nuisance separation and hence the regime placement.}
  \label{fig:phase_diagnostics}
\end{figure}

\clearpage
\end{document}